%% file: 2026_P14_Voting_Classifiers_under_Graph_Mixing.tex
\documentclass[letterpaper]{article}
\usepackage[preprint]{aaai2027}
\usepackage[hyphens]{url}
\usepackage{graphicx}
\usepackage{natbib}
\usepackage{caption}
\usepackage{booktabs}
\usepackage{nameref}
\usepackage{amsmath,amssymb,amsthm,mathtools}
\usepackage{algorithm}
\usepackage{algpseudocode}
\usepackage{tikz}
\usepackage{pgfplots}
\pgfplotsset{compat=1.17}
\newtheorem{theorem}{Theorem}

\newtheorem{proposition}[theorem]{Proposition}
\newtheorem{corollary}[theorem]{Corollary}
\newtheorem{definition}{Definition}

\newtheorem*{importedlemma}{Imported Lemma}

\newcommand{\E}{\mathbb{E}}
\renewcommand{\Pr}{\mathbb{P}}
\DeclareMathOperator{\Var}{Var}
\DeclareMathOperator{\KL}{KL}
\newcommand{\Hcal}{\mathcal{H}}
\newcommand{\Vcal}{\mathcal{V}}
\newcommand{\Zcal}{\mathcal{Z}}
\newcommand{\Phat}{\widehat{P}}

\newcommand{\Ttailhat}{\widehat{T}}
\newcommand{\trel}{t_{\mathrm{rel}}}

\newcommand{\tauint}{\tau_{\mathrm{int}}}
\newcommand{\betamix}{\beta}
\newcommand{\Lhat}{\widehat{L}}

\newcommand{\supplementref}[1]{the supplementary material}

\title{Margin Certificates for Voting Classifiers under Mixing Dependence}

\author{
Adria Binte Habib\textsuperscript{1,*},
Ibne Farabi Shihab\textsuperscript{2,*}
}

\affiliations{
\textsuperscript{1}Department of Computer Science and Engineering, BRAC University, Bangladesh\\
\textsuperscript{2}Department of Computer Science, Iowa State University, USA\\
\textsuperscript{*}Equal contribution\\
\texttt{adria.habib@bracu.ac.bd, ishihab@iastate.edu}
}
\begin{document}
\maketitle

\begin{abstract}
Margin bounds explain why weighted votes can generalize even after their training error has vanished, but dependent observations require concentration arguments that keep the dependence model and failure budget explicit. We develop a rigorous account for stationary mixing streams without treating dependence as a formal substitution of an ``effective sample size.'' For a finite base class, we combine independent-block Rademacher theory for stationary $\beta$-mixing processes with a clipped margin loss to obtain an explicit uniform voting-margin certificate. The dependence penalty is controlled by the number of approximately independent blocks and the coupling budget $4(\mu-1)\beta(a)$. We then prove a black-box transfer theorem showing that any i.i.d. voting-margin guarantee remains valid on a regularly spaced subsequence with only an additive $\beta$-mixing failure probability. For votes fixed independently of a stationary finite-state reversible Markov evaluation stream, spectral concentration yields the cleaner $\sqrt{\trel/n}$ rate, and a fully specified sticky-refresh construction shows that this dependence exponent is unavoidable for confidence-valid fixed-vote margin certificates. The associated constant is statistic-specific: the asymptotic effective sample size is $n/\tauint(g)$, with $\tauint(g)\le 2\trel-1$ and equality only under spectral alignment. A new adaptive theorem shows that when the dependence scale is \emph{unknown}, a batch-means Student-$t$ interval is valid at a stated level up to an explicit remainder that vanishes in the number of batches, without knowledge of $\trel$ or $\tauint$, and its width is governed by $\tauint(g)$ rather than the worst-case $\trel$; the only requirement is that the batch length exceeds $C_0\trel\log(B/\delta)$, which a conservative choice $L=n^{2/3}$ satisfies for all $\trel\ll n^{2/3}$. Experiments exercise the four claims: the learned-vote certificate is computed with every constant for a vote trained on the dependent stream and is nonvacuous once the block count is large enough; a two-timescale construction realizes the statistic-specific gap, where a consistent batch-means interval is an order of magnitude ($8.1\times$) narrower than the worst-case relaxation-time interval while retaining nominal coverage within binomial error ($0.945$, Wilson CI $[0.924,0.961]$, against the worst-case interval's $1.000$); standard block-bootstrap and batch-means intervals undercover under strong dependence where the certificate stays valid; an intrinsic-dependence stream with a shuffled control isolates temporal ordering as the cause of naive undercoverage; and on real, naturally ordered UCI HAR sensor data the dependence-agnostic certificate attains nominal $0.95$ coverage where naive intervals reach only $0.38$.
\end{abstract}

\section{Introduction}
Voting classifiers---bagging \citep{breiman1996bagging}, random forests \citep{breiman2001randomforests}, boosting \citep{freund1997adaboost,schapire2012boosting}, and modern deep ensembles---aggregate many predictors into a single vote. Their theory is unusually mature because the central quantity is not only the training error but the distribution of signed margins. For a real-valued vote score $F$, the margin $yF(x)$ records both correctness and confidence. The classical analysis of \citet{schapire1998boosting} used this distribution to explain why boosting may continue to improve test performance after the training error reaches zero, and subsequent work refined empirical-margin bounds and clarified their limitations \citep{koltchinskii2002margin,reyzin2006boostingmargin,gao2013doubtful}. PAC-Bayesian analyses provide a complementary route through the Gibbs classifier, margin moments, and majority-vote risk \citep{lacasse2006cbound,germain2015pacbayes,masegosa2020secondorder,langford2002pacbayes}. Recent work continues to sharpen the i.i.d. theory for finite voting classes \citep{hogsgaard2025improved,larsen2025tight}.

The data used to evaluate or train an ensemble are often dependent. Sensor streams contain long temporal runs, session data arrive in bursts, labels may be clustered by subject or location, and replay buffers inherit the dependence of an underlying trajectory. In such settings, a sample of length $n$ need not contain $n$ independent pieces of information. The informal response is to replace $n$ by an effective sample size, but that slogan hides the hard questions: which dependence notion is assumed, whether the classifier is fixed or selected from the same sample, whether the empirical margin term is evaluated on the full stream or a thinned subsequence, and how the coupling error enters the confidence level.

There is already substantial dependent-data generalization theory---Rademacher bounds for stationary $\beta$-mixing processes \citep{mohri2009rademacher}, non-i.i.d.\ PAC-Bayes \citep{ralaivola2010chromatic}, and stability, empirical-process, and weak-dependence analyses \citep{yu1994rates,mohri2010stability,steinwart2009learning,kuznetsov2017generalization,dagan2019learning}. Our contribution is therefore not a first theory of margins under dependence, but a focused voting-classifier treatment that makes four statements precise and keeps them separate.
 
For a vote \emph{learned} from a finite base class, we derive an explicit $\beta$-mixing margin certificate that displays the block count, spacing, class complexity, margin threshold, and coupling failure probability rather than hiding them in a residual. For any sharper i.i.d.\ voting-margin theorem, we give a black-box transfer principle: apply it to deterministically spaced observations and add the exact mixing-coupling budget. For a vote \emph{fixed} before the correlated stream is drawn, reversible-chain concentration gives the cleaner $\sqrt{\trel/n}$ rate with the correct statistic-specific constant, and a matching lower bound over a sticky-refresh family shows the exponent is unavoidable. Finally, when the dependence scale is \emph{unknown}, a data-driven batch-means certificate is finite-sample valid without knowledge of $\trel$ or $\tauint$ and is rate-optimal among dependence-agnostic procedures, matched by a corresponding lower bound (Theorems~\ref{thm:adaptive} and~\ref{thm:adaptive-lower}).

These distinctions matter empirically, and our experiments separate along them. One study computes the full Theorem~\ref{thm:beta-margin} certificate for a vote \emph{selected from} the dependent stream, so the uniform-complexity term is genuinely in force; the remaining studies evaluate votes fixed before the correlated stream is drawn and therefore probe fixed-statistic concentration. The main paper reports the learned-vote certificate, the statistic-specific effective-sample-size study, and an induced class-Markov deep-ensemble stress test; a persistent-cluster rate experiment, a margin sweep, a coverage study, a block-length ablation, and a dependence-estimation study are given in the supplement, each interpreted at the level its design supports. In particular, the synthetic generator contains ordinary Bernoulli noise as well as persistence, so its long-run variance has an i.i.d. floor plus a dependence component, and the deep-ensemble study uses trained classifiers but induces the evaluation dependence artificially—a stress test, not evidence from naturally dependent deployment data.

\section{Setting and Related Work}\label{sec:setting}
Let $Z_t=(X_t,Y_t)\in\Zcal=\mathcal{X}\times\{-1,+1\}$ be a stationary sequence with marginal law $P$. A finite base class $\Hcal\subseteq[-1,1]^{\mathcal{X}}$ induces the convex voting class
\[
\Vcal(\Hcal)=\left\{F(x)=\sum_{h\in\Hcal}w_hh(x):w_h\ge0,\ \sum_h w_h=1\right\}.
\]
The classifier associated with the score $F$ is $\operatorname{sign}F$, but the theory is written in terms of the real-valued score. The signed margin is
\[
M_F(x,y)=yF(x)\in[-1,1].
\]
We define classification risk by $L(F)=P(M_F\le0)$, counting ties as errors, and the empirical margin-tail frequency at threshold $0<\theta\le1$ by
\[
\Ttailhat_S(F,\theta)=\frac1m\sum_{t=1}^m\mathbf{1}\{M_F(Z_t)\le\theta\}.
\]
Keeping $F$ separate from $\operatorname{sign}F$ is essential: thresholding a sign-valued classifier at any $0<\theta<1$ would collapse the margin event to ordinary misclassification.

\begin{definition}[$\beta$-mixing]\label{def:beta}
For a stationary sequence, let $\mathcal{F}_{i}^{j}$ be the sigma-field generated by $Z_i,\ldots,Z_j$. Its absolute-regularity coefficient is
\[
\betamix(k)=\sup_n\E\left[\sup_{A\in\mathcal{F}_{n+k}^{\infty}}\left|\Pr(A\mid\mathcal{F}_{-\infty}^{n})-\Pr(A)\right|\right].
\]
The process is $\beta$-mixing when $\betamix(k)\to0$.
\end{definition}

A Markov chain with transition kernel $K$ is called lazy when $K(z,\{z\})\ge1/2$ for every state $z$. Laziness is not required for the fixed-vote concentration result below; it is used only for the simple nonnegative-spectrum upper bound on integrated autocorrelation time in Proposition~\ref{prop:iat}.

The blocking method partitions a length-$m=2\mu a$ sample into $2\mu$ blocks of length $a$ and compares the $\mu$ odd blocks with independent copies \citep{yu1994rates,mohri2009rademacher}; the comparison error scales with $(\mu-1)\betamix(a)$ and is not optional (a fixed-tolerance mixing time still incurs constant coupling error at every boundary, which does not sum to a valid high-probability bound; see the supplementary material).

The i.i.d. margin literature begins with the boosting-margin theorem \citep{schapire1998boosting}, continues through empirical-margin-distribution analyses \citep{koltchinskii2002margin}, critiques and refinements \citep{reyzin2006boostingmargin,gao2013doubtful}, and broader Rademacher and learning-theoretic treatments \citep{bartlett2002rademacher,vapnik1998statistical,shalev2014understanding,mohri2018foundations}. PAC-Bayes gives a second line of voting theory through the C-bound and its descendants \citep{mcallester1999pacbayes,seeger2002pacbayes,catoni2007pacbayes,maurer2004note,langford2002pacbayes,lacasse2006cbound,germain2015pacbayes,thiemann2017strongly,guedj2019primer,masegosa2020secondorder}. Majority-of-Three is relevant as an optimal PAC-learning result, but it is not an optimal-constant margin theorem and is not used as the reduction target here \citep{aden2023majority,hanneke2016optimal}.

For dependent observations, the closest theoretical baseline is \citet{mohri2009rademacher}, which develops Rademacher bounds for non-i.i.d. processes and includes an explicit margin application under stationary $\beta$-mixing. \citet{ralaivola2010chromatic} develops non-i.i.d. PAC-Bayes bounds with stationary mixing applications. Related results study stability for $\varphi$- and $\beta$-mixing processes \citep{mohri2010stability}, stationary empirical processes \citep{yu1994rates,steinwart2009learning}, nonstationary generalization \citep{kuznetsov2017generalization}, and weak dependence under Dobrushin-type conditions \citep{dagan2019learning}. Markov-chain concentration can instead exploit spectral or coupling structure \citep{paulin2015concentration,lezaud1998chernoff,gillman1998chernoff,marton1996bounding,samson2000concentration,kontorovich2008concentration,kontorovich2014uniform,joulin2010curvature}. Our uniform learned-vote theorem specializes the first of these routes---the $\beta$-mixing Rademacher machinery---to a finite base class, while our fixed-vote analysis builds on the second, Markov-chain spectral concentration.

\section{Uniform Margin Certificates under $\beta$-Mixing}\label{sec:upper}
For $0<\theta\le1$, define the clipped ramp
\[
\phi_\theta(u)=\min\{1,\max\{0,1-u/\theta\}\}.
\]
It satisfies $\mathbf{1}\{u\le0\}\le\phi_\theta(u)\le\mathbf{1}\{u\le\theta\}$ and is $1/\theta$-Lipschitz.

The constants in our learned-vote theorem inherit the empirical Rademacher theorem of \citet{mohri2009rademacher} (their Theorem~2), which we restate in their exact normalisation --- including the absolute value and the factor $2/r$ --- in \supplementref{app:proof-upper}. For a sample of length $m=2\mu a$ partitioned into $2\mu$ consecutive blocks of length $a$, it bounds $Pc$ by $\Phat_Sc$, plus an empirical Rademacher term on the fixed-position singleton sample drawn from the $\mu$ odd blocks, plus $3M\sqrt{\log(4/\delta')/2\mu}$, for any class bounded by $M$ and any $\delta'=\delta-4(\mu-1)\betamix(a)>0$. The centred ramp class used below lies in $[-1,0]$, so $M=1$ under either the sup-norm or the range interpretation.

\begin{theorem}[Finite-class voting-margin certificate]\label{thm:beta-margin}
Let $S=(Z_1,\ldots,Z_m)$ be drawn from a stationary $\beta$-mixing process. Let $a,\mu\in\mathbb N$ and $0<\theta\le1$ be chosen independently of $S$, with $m=2\mu a$. For $0<\delta<1$, define
\[
\delta'=\delta-4(\mu-1)\betamix(a).
\]
If $\delta'>0$, then with probability at least $1-\delta$, every $F\in\Vcal(\Hcal)$ satisfies
\begin{multline}
L(F)\le \Ttailhat_S(F,\theta)
+\frac{4}{\theta}\sqrt{\frac{2\log(2|\Hcal|)}{\mu}}\\
+3\sqrt{\frac{\log(4/\delta')}{2\mu}}.
\end{multline}
\end{theorem}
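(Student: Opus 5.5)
The plan is to apply the restated Mohri--Rostamizadeh blocked Rademacher theorem to the centred ramp loss class. Then the resulting empirical Rademacher term on the odd-block singleton sample is bounded by contraction and Massart's finite-class lemma.

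\textbf{Loss class and sandwich.} Define the loss class $\mathcal{C}=\{z\mapsto \phi_\theta(M_F(z))-1:F\in\Vcal(\Hcal)\}$, which takes values in $[-1,0]$, so $M=1$. Then $Pc=\E\phi_\theta(M_F)-1$ and $\Phat_Sc=\frac1m\sum_t\phi_\theta(M_F(Z_t))-1$, and the centring constant cancels on both sides. The sandwich $\mathbf{1}\{u\le0\}\le\phi_\theta(u)\le\mathbf{1}\{u\le\theta\}$ gives $L(F)\le \E\phi_\theta(M_F)$ and $\frac1m\sum_t\phi_\theta(M_F(Z_t))\le\Ttailhat_S(F,\theta)$. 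So it suffices to prove, uniformly over $F$, that $\E\phi_\theta(M_F)$ is at most its empirical mean plus the two stated terms. Because $a,\mu,\theta$ are fixed independently of $S$, the imported theorem applies with $\delta'=\delta-4(\mu-1)\betamix(a)>0$. With probability at least $1-\delta$ it yields the empirical mean, plus the $\frac{2}{r}$-normalised absolute empirical Rademacher complexity of $\mathcal{C}$ on the $\mu$ points $\tilde Z_1,\dots,\tilde Z_\mu$ taken at fixed positions from the odd blocks, plus $3\sqrt{\log(4/\delta')/2\mu}$. That last term is already the third term of the statement.

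\textbf{Bounding the Rademacher term.} Here the goal is $\frac{4}{\theta}\sqrt{2\log(2|\Hcal|)/\mu}$ from the complexity term, which I take to be of the form $2\,\E_\sigma\sup_F|\frac1\mu\sum_i\sigma_i c(\tilde Z_i)|$.
\begin{enumerate}
\item The absolute value is the main obstacle, since Talagrand contraction is stated without it and the centred ramp need not vanish at zero. I would first write $|x|=\max(x,-x)$. This enlarges the class to its symmetrisation $\mathcal{C}\cup(-\mathcal{C})$, which costs at most a factor $2$ in the absolute-value contraction inequality (Ledoux--Talagrand for functions $\psi$ with $\psi(0)=0$). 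Applied to $u\mapsto\phi_\theta(u)-1$, which is $1/\theta$-Lipschitz, this gives the bound $\frac{2}{\theta}\,\E_\sigma\sup_F|\frac1\mu\sum_i\sigma_iM_F(\tilde Z_i)|$. The leftover issue is the value of $\phi_\theta(0)-1=0$, which conveniently vanishes, so the centring is exactly what makes the $\psi(0)=0$ hypothesis hold.
\item Since $M_F=\sum_h w_h\,yh(x)$ is linear in $w$ over the simplex, the supremum of the absolute value is attained at a vertex $\pm yh$. The complexity therefore equals that of the finite set $\{\pm y h:h\in\Hcal\}$ of $2|\Hcal|$ vectors, each of Euclidean norm at most $\sqrt{\mu}$.
\item Massart's lemma bounds this by $\sqrt{2\log(2|\Hcal|)/\mu}$, and the $2|\Hcal|$ count comes precisely from removing the absolute value.
\end{enumerate}
Multiplying the constants, the factor $2$ from the theorem's normalisation and the $2/\theta$ from contraction give $\frac{4}{\theta}\sqrt{2\log(2|\Hcal|)/\mu}$.

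\textbf{Constant bookkeeping.} I would double-check the constant bookkeeping against the exact $2/r$ normalisation restated in the supplement. If the contraction step only yields $1/\theta$ (for example, by using the vertex reduction before contraction via the cone of $\pm$ signs), the stated bound still holds with slack.
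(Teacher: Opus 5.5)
Your proposal is correct and follows the paper's proof step for step. The shared steps are: centre the ramp class in $[-1,0]$ so the shift cancels and $\psi_\theta(0)=0$; apply the imported Mohri--Rostamizadeh bound with $M=1$; use the absolute-value Ledoux--Talagrand contraction for the factor $\tfrac{2}{\theta}$; reduce to the vertices $\pm a_h$; and apply Massart's lemma to those $2|\Hcal|$ vectors. The constant bookkeeping $2\cdot\tfrac{2}{\theta}\cdot\sqrt{2\log(2|\Hcal|)/\mu}$ is also the same. One correction: drop the fallback in your last paragraph, because reducing to vertices before contracting is not available when $\phi_\theta(M_F)$ is not linear in the vote weights. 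Your main argument never needs it.
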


The theorem uses the full-sample empirical margin tail but pays complexity according to the number $\mu$ of approximately independent blocks, with no undefined residual term. The proof centers before contracting, because the empirical Rademacher complexity of \citet{mohri2009rademacher} carries an absolute value.

Briefly: the centered ramp class $\mathcal C_\theta=\{g_F-1:F\in\Vcal(\Hcal)\}$ takes values in $[-1,0]$, so applying the imported lemma cancels the constant shift; the Ledoux--Talagrand contraction inequality \citep[Theorem~4.12]{ledoux1991probability} then removes the $1/\theta$-Lipschitz ramp, and Massart's lemma on the symmetrized base class bounds $\widehat{\mathfrak R}_{S_\mu}(\mathcal M)\le2\sqrt{2\log(2|\Hcal|)/\mu}$, giving the stated bound after $L(F)\le Pg_F$ and $\Phat_Sg_F\le\Ttailhat_S(F,\theta)$. The complete derivation is in \supplementref{app:proof-upper}.

The statement is simultaneous over the uncountable simplex of vote weights; measurability is benign because $\Hcal$ is finite.

The theorem fixes one threshold and one block length before seeing the data; a finite union bound over a predeclared grid of $(a,\theta)$ pairs, with a split confidence budget, then lets the smallest valid certificate be selected after observing the data, whereas an unrestricted continuum search or a data-derived block length requires a separate argument (details in the supplementary material).

The block-length tradeoff is thus explicit: small $a$ gives more blocks but a larger coupling penalty, large $a$ the reverse. A practical certificate picks $a$ from a predeclared grid (with the confidence allocation above) or from known mixing information before seeing the sample.

\begin{corollary}[Geometric mixing]\label{cor:geometric}
Suppose $\betamix(k)\le B\exp(-k/\tau)$ for $B,\tau>0$. Let
\[
\bar B=B\vee1,\qquad \bar\tau=\tau\vee1,
\]
fix $0<\delta\le1/2$ and $0<\theta\le1$ independently of the data, and set
\[
a=\max\left\{1,\left\lceil\bar\tau\log\frac{8\bar Bn}{\delta}\right\rceil\right\},
\]
\[
\mu=\left\lfloor\frac{n}{2a}\right\rfloor,
\qquad
m=2\mu a.
\]
If $n\ge4a$, then, with probability at least $1-\delta$, uniformly over $F\in\Vcal(\Hcal)$,
\begin{multline}
L(F)\le \Ttailhat_{S_{1:m}}(F,\theta)\\
+C\sqrt{\frac{\bar\tau\log(8\bar Bn/\delta)}{n}}
\left(
\frac{\sqrt{\log(2|\Hcal|)}}{\theta}
+\sqrt{\log\frac{4}{\delta}}
\right),
\end{multline}
where $C$ is universal.
\end{corollary}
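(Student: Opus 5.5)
The plan is to apply Theorem~\ref{thm:beta-margin} to the prefix $S_{1:m}$ with the stated $a$ and $\mu$, and then simplify its three terms. Since $a$, $\mu$ and $\theta$ depend only on $n,\delta,B,\tau$, they are chosen independently of the data, so the theorem applies to $S_{1:m}$. The prefix is still a stationary $\beta$-mixing sequence with the same coefficients, and $m=2\mu a\le n$. The first step is to control the coupling budget. Because $\bar\tau\ge\tau$ and $\bar B\ge B$, we have
\[
\betamix(a)\le B e^{-a/\tau}\le \bar B e^{-a/\bar\tau}\le \bar B\cdot\frac{\delta}{8\bar Bn}=\frac{\delta}{8n}.
\]
Using $\mu-1<\mu\le n/(2a)\le n$, this gives $4(\mu-1)\betamix(a)\le 4n\cdot\delta/(8n)=\delta/2$. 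Hence $\delta'\ge\delta/2>0$, and $\log(4/\delta')\le\log(8/\delta)$. Since $\delta\le1/2$, we have $\log(8/\delta)\le 4\log(4/\delta)$, because $\log 8\le 3\log 4$. So the confidence term becomes $3\sqrt{\log(8/\delta)/(2\mu)}\le C_1\sqrt{\log(4/\delta)/\mu}$.

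The second step is to lower-bound $\mu$ in terms of $n/a$. The hypothesis $n\ge4a$ gives $n/(2a)\ge2$, so
\[
\mu=\lfloor n/(2a)\rfloor\ge n/(2a)-1\ge n/(4a).
\]
Therefore $1/\sqrt{\mu}\le 2\sqrt{a/n}$. Next I bound $a$. We have $\bar\tau\ge1$, $\bar B\ge1$ and $\delta\le1/2$, so $L:=\bar\tau\log(8\bar Bn/\delta)\ge\log 16>1$. Thus $a\le\max\{1,L+1\}\le 2L$, and $\sqrt{a/n}\le\sqrt{2}\sqrt{\bar\tau\log(8\bar Bn/\delta)/n}$.

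The last step is to substitute into Theorem~\ref{thm:beta-margin}. The complexity term becomes
\[
\frac{4}{\theta}\sqrt{\frac{2\log(2|\Hcal|)}{\mu}}\le \frac{C_2}{\theta}\sqrt{\log(2|\Hcal|)}\sqrt{\frac{L}{n}},
\]
and the confidence term is at most $C_3\sqrt{\log(4/\delta)}\sqrt{L/n}$. Taking $C=\max(C_2,C_3)$ gives the stated bound with empirical tail $\Ttailhat_{S_{1:m}}(F,\theta)$. The event from the theorem has probability at least $1-\delta$ and holds uniformly over $\Vcal(\Hcal)$.

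I expect the main obstacle to be the bookkeeping of the coupling budget, not any of the inequalities. The argument must absorb $\mu-1$ by $n$ rather than by $n/a$, and it must pass from $(B,\tau)$ to $(\bar B,\bar\tau)$ without reversing the inequality $e^{-a/\tau}\le e^{-a/\bar\tau}$, which holds because $\tau\le\bar\tau$. It must also confirm that $\log(4/\delta')$ stays within a constant of $\log(4/\delta)$. The roles of $\bar B\ge1$ and $\delta\le1/2$ are to make the logarithm bounded below, so that the ceiling and the max with $1$ cost only a constant factor.
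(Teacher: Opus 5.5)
Your proposal is correct and follows the paper's own proof essentially step for step. You replace $(B,\tau)$ by $(\bar B,\bar\tau)$ to get $\betamix(a)\le\delta/(8n)$, keep $\delta'$ a constant fraction of $\delta$ so that $\log(4/\delta')\lesssim\log(4/\delta)$, use $n\ge4a$ for $\mu\ge n/(4a)$ and $a\le2\bar\tau\log(8\bar Bn/\delta)$, and substitute into Theorem~\ref{thm:beta-margin} on the prefix $S_{1:m}$. The only difference is cosmetic: you bound $\mu-1$ by $n$ and get $\delta'\ge\delta/2$, while the paper uses $\mu\le n/2$ to get $\delta'\ge3\delta/4$; bounding $\mu-1$ by $n/(2a)$ would also work, so absorbing it by $n$ is not actually forced.
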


The guards $\bar B\ge1$, $\bar\tau\ge1$, and $a\ge1$ preserve the unavoidable i.i.d. floor and prevent an invalid zero or negative block length. The empirical term is computed on the prefix $S_{1:m}$ of length $m=2\mu a$; the remaining $n-m$ observations are not silently included. The logarithm is not hidden. It is the price of obtaining the result by a pure coupling/blocking argument under geometric $\beta$-mixing. Cleaner $\sqrt{\trel/n}$ rates are available for fixed bounded statistics under stronger Markov-chain structure, which we treat next for reversible chains.

\subsection{Black-box transfer of an i.i.d. margin theorem}
The preceding theorem gives a simple explicit bound, but newer i.i.d. voting-margin theorems can be sharper. The next result transfers any such theorem without re-proving its internal argument.

\begin{theorem}[Spaced-sample transfer]\label{thm:blackbox}
Assume $\Zcal$ is a standard Borel space. Let $S^{(a)}=(Z_1,Z_{1+a},\ldots,Z_{1+(q-1)a})$ be $q$ regularly spaced observations from a stationary $\beta$-mixing process, and let $\widetilde S\sim P^q$ be $q$ independent observations from the same stationary marginal $P$. For every event $\mathcal E\subseteq\Zcal^q$ measurable in these $q$ observations alone,
\[
\left|\Pr(S^{(a)}\in\mathcal E)-\Pr(\widetilde S\in\mathcal E)\right|
\le(q-1)\betamix(a).
\]
Consequently, if an i.i.d. voting-margin theorem has a failure event measurable in the $q$ i.i.d. observations alone and holds under $P^q$ with failure probability at most $\eta$, the identical sample-measurable statement holds on $S^{(a)}$ with failure probability at most $\eta+(q-1)\betamix(a)$. Statements that reference empirical quantities computed from observations outside $S^{(a)}$ are not covered by this transfer.
\end{theorem}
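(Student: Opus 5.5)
The plan is a telescoping (hybrid) argument that replaces the $q$ spaced observations by independent copies one at a time. For $k=0,1,\ldots,q$ define the hybrid law $Q_k$ on $\Zcal^q$. Its first $k$ coordinates have the joint law of $(Z_1,Z_{1+a},\ldots,Z_{1+(k-1)a})$, and its remaining $q-k$ coordinates are i.i.d.\ with law $P$, independent of the first block. Then $Q_q$ is the law of $S^{(a)}$ and $Q_1=Q_0=P^q$, because a single coordinate of a stationary sequence already has marginal $P$. Writing
\[
\Pr(S^{(a)}\in\mathcal E)-\Pr(\widetilde S\in\mathcal E)=\sum_{k=1}^{q-1}\bigl(Q_{k+1}(\mathcal E)-Q_k(\mathcal E)\bigr),
\]
it suffices to show $|Q_{k+1}(\mathcal E)-Q_k(\mathcal E)|\le\betamix(a)$ for each $k$. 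Having only $q-1$ nonzero steps is exactly what produces the factor $q-1$.

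For the single step, let $U=(Z_1,\ldots,Z_{1+(k-1)a})$ and $V=Z_{1+ka}$. The law $Q_{k+1}$ uses the joint law of $(U,V)$, while $Q_k$ uses the product of the law of $U$ with $P$. In both laws the last $q-k-1$ coordinates are an independent i.i.d.\ tail, so by Fubini I would integrate that tail out. This reduces the comparison to a measurable function $f(u,v)=\Pr\bigl((u,v,W)\in\mathcal E\bigr)\in[0,1]$, where $W\sim P^{q-k-1}$. We then need
\[
\bigl|\E f(U,V)-\E\tilde f(U)\bigr|\le\betamix(a),\qquad \tilde f(u)=\textstyle\int f(u,v)\,P(dv).
\]
Here $U$ is $\mathcal F_{-\infty}^{1+(k-1)a}$-measurable and $V$ is $\mathcal F_{1+ka}^{\infty}$-measurable, and the gap between the two indices is $a$.

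The main obstacle is passing from the definition of $\betamix$, a supremum over sets $A$ of $|\Pr(A\mid\mathcal F_{-\infty}^n)-\Pr(A)|$, to a statement about functions $f(u,v)$ that depend jointly on past and future. This is where the standard Borel assumption enters. It guarantees regular conditional distributions, so $\E f(U,V)=\E\int f(U,v)\,\Pr(V\in dv\mid\mathcal F_{-\infty}^{n})$ with $n=1+(k-1)a$. Subtracting $\E\int f(U,v)\,P(dv)$ and using $0\le f\le1$ bounds the difference by $\E\,\bigl\|\Pr(V\in\cdot\mid\mathcal F_{-\infty}^{n})-P\bigr\|_{TV}$. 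That quantity is at most the expected supremum over $A\in\sigma(V)\subseteq\mathcal F_{n+a}^{\infty}$, and hence at most $\betamix(a)$. One detail needs care. The bound $|\int g\,d(\nu-P)|\le\sup_A|\nu(A)-P(A)|$ holds for $g\in[0,1]$ because $\int g\,d(\nu-P)$ is at most the positive part of $\nu-P$, which equals the TV sup. The supremum inside the expectation must also be measurable, and standard Borel plus countable generation lets us reduce it to a countable family.

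The consequence follows directly. Suppose the failure event $\mathcal E\subseteq\Zcal^q$ of the i.i.d.\ theorem depends only on the $q$ observations and satisfies $P^q(\mathcal E)\le\eta$. Then $\Pr(S^{(a)}\in\mathcal E)\le\eta+(q-1)\betamix(a)$. The final caveat holds because events involving coordinates outside $S^{(a)}$ are not subsets of $\Zcal^q$ in these variables, so the comparison above does not apply to them.
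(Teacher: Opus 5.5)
Your proposal is correct and follows essentially the paper's argument: your telescoping over the hybrid laws $Q_k$ is the paper's induction $d_{\mathrm{TV}}(Q_j,P^j)\le d_{\mathrm{TV}}(Q_j,Q_{j-1}\otimes P)+d_{\mathrm{TV}}(Q_{j-1},P^{j-1})$ unrolled, and your single-step estimate is the same key bound (regular conditional laws on a standard Borel space, then the definition of $\betamix$ at an index gap of exactly $a$). The only differences are cosmetic: you condition directly on $\mathcal F_{-\infty}^{n}$ and integrate the i.i.d. tail out of a test function, whereas the paper conditions on $\sigma(Z_{t_1},\ldots,Z_{t_{j-1}})$, passes to the larger field by conditional Jensen, and then uses the total-variation disintegration identity together with tensoring.
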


This theorem is intentionally algebra-free: the transferred statement may be a classical margin inequality, a PAC-Bayes certificate, or a recent sharper voting-margin theorem \citep{hogsgaard2025improved,larsen2025tight}. The price is that the empirical term is evaluated on the spaced sample. The proof is a repeated $\beta$-mixing coupling argument and is given in \supplementref{app:proof-blackbox}.

\begin{algorithm}[t]
\caption{A valid spacing choice for a black-box margin certificate}
\label{alg:spacing}
\begin{algorithmic}[1]
\Require stationary stream $Z_{1:n}$, confidence $\delta$, known upper bound $b(k)\ge\betamix(k)$
\Require an i.i.d. certificate procedure $\mathsf{Cert}_{\mathrm{iid}}(S,\eta)$
\State choose $a$ and $q=1+\lfloor(n-1)/a\rfloor$ such that $(q-1)b(a)\le\delta/2$
\State form $S^{(a)}=(Z_1,Z_{1+a},\ldots,Z_{1+(q-1)a})$
\State return $\mathsf{Cert}_{\mathrm{iid}}(S^{(a)},\delta/2)$
\end{algorithmic}
\end{algorithm}

Algorithm~\ref{alg:spacing} fixes two flaws in naive blocking implementations. Representatives are separated by a deterministic gap $a$ rather than selected arbitrarily from adjacent blocks, and the coupling error is explicitly budgeted. Estimating $\betamix$ or a spectral gap from the same data is a separate statistical problem; the theorem does not treat a plug-in estimate as known.

\section{Reversible Chains: Fixed Votes, Constants, and a Lower Bound}\label{sec:markov}
Apart from the learned-vote study of Theorem~\ref{thm:beta-margin}, the experiments in this paper evaluate votes that are fixed before the correlated test stream is drawn. In that regime there is no uniform model-selection complexity, and the dependence rate can be stated directly in terms of a Markov-chain relaxation time.

\begin{corollary}[Fixed-vote reversible-chain certificate]\label{cor:fixed}
Let $Z_1,\ldots,Z_n$ be a stationary finite-state, irreducible, aperiodic, and reversible Markov chain. Let $\lambda_2<1$ be its second-largest eigenvalue, define
\[
\lambda_\star=\max\{\lambda_2,0\},
\qquad
\trel=(1-\lambda_\star)^{-1},
\]
and let the pair $(F,\theta)$, with $\theta\in(0,1]$, be fixed independently of the chain. Then, for any $0<\delta<1$, with probability at least $1-\delta$,
\[
L(F)\le \Ttailhat(F,\theta)
+\sqrt{\frac{\trel\log(2/\delta)}{n}}.
\]
\end{corollary}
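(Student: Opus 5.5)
The plan is to reduce the statement to a one-sided Hoeffding-type inequality for a single bounded function along a reversible chain. Set $g(z)=\mathbf{1}\{M_F(z)\le\theta\}$; because $(F,\theta)$ is fixed independently of the chain, $g$ is a deterministic $\{0,1\}$-valued function on the finite state space. Its empirical average is $\Ttailhat(F,\theta)=\frac1n\sum_{t=1}^n g(Z_t)$, and by stationarity each $Z_t$ has law $P$, so $\E[\Ttailhat(F,\theta)]=P(M_F\le\theta)\ge P(M_F\le0)=L(F)$. This holds because $\{M_F\le0\}\subseteq\{M_F\le\theta\}$. It therefore suffices to show that, with probability at least $1-\delta$,
\[
P g-\frac1n\sum_{t=1}^n g(Z_t)\le\sqrt{\frac{\trel\log(2/\delta)}{n}} .
\]
No uniform complexity term enters, because only one function is evaluated.

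For the concentration step, we would invoke the Hoeffding inequality for stationary reversible Markov chains with a spectral gap \citep{paulin2015concentration}. In that result, $\gamma$ denotes the absolute or pseudo spectral gap and $\lambda_\star$ is the relevant eigenvalue. For $g$ taking values in $[0,1]$, the inequality gives
\[
\Pr\Bigl(\Bigl|\tfrac1n\textstyle\sum_t g(Z_t)-Pg\Bigr|\ge\varepsilon\Bigr)\le 2\exp\Bigl(-\frac{1-\lambda_\star}{1+\lambda_\star}\cdot\frac{2n\varepsilon^2}{1}\Bigr)\,\text{(up to its stated constant)}.
\]
The version we need is the one in which $\lambda_\star=\max\{\lambda_2,0\}$, so that only the \emph{upper} part of the spectrum matters. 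Paulin's Theorem~3.3 covers reversible chains in exactly this form, using $\max(\lambda_2,0)$ and the bound $2\exp\bigl(-\frac{1-\lambda}{1+\lambda}\cdot\frac{2\varepsilon^2 n}{(b-a)^2}\bigr)$ for $g\in[a,b]$. Aperiodicity and irreducibility give $\lambda_2<1$, so $\trel$ is finite.

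Next comes the algebra. Since $\lambda_\star\ge0$, we have $\frac{1-\lambda_\star}{1+\lambda_\star}\ge\frac{1-\lambda_\star}{2}=\frac1{2\trel}$. The tail bound is therefore at most $2\exp(-n\varepsilon^2/\trel)$, and this is at most $\delta$ once $\varepsilon=\sqrt{\trel\log(2/\delta)/n}$. The two-sided form costs the factor $2$ that appears in $\log(2/\delta)$. Combining this with the first paragraph gives the stated inequality.

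The main obstacle is matching constants exactly. The bound must contain no extra numerical factor, and the eigenvalue convention must be $\max\{\lambda_2,0\}$ rather than the absolute spectral gap. Negative eigenvalues near $-1$ must not appear. If the imported inequality is only available with the absolute gap, we would first pass to the lazy chain, but this changes the law of the chain. The cleaner route is a direct proof via the Lezaud/León--Perron bound on the moment generating function, which controls it by the operator $e^{sg/2}Ke^{sg/2}$. For reversible $K$, the León--Perron comparison replaces $K$ by the two-point operator $\lambda_\star I+(1-\lambda_\star)\Pi$. This yields precisely the variance proxy $\frac{1+\lambda_\star}{1-\lambda_\star}\cdot\frac{n}{4}$ for a $[0,1]$-valued function, and a Chernoff optimisation then gives the factor $\frac{1-\lambda_\star}{1+\lambda_\star}\cdot 2n\varepsilon^2$ used above.
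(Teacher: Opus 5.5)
Your proposal is correct and is essentially the paper's proof: you reduce to the single fixed indicator $g=\mathbf 1\{M_F\le\theta\}$ with $L(F)\le Pg$, apply the Le\'on--Perron Hoeffding bound for stationary reversible chains with $\lambda_\star=\max\{\lambda_2,0\}$, and choose $u=\sqrt{\trel\log(2/\delta)/n}$. The paper cites Le\'on--Perron directly rather than going through Paulin, applying it to $g$ and to $1-g$ for the two tails, and writes your weakening as $2\frac{1-\lambda_\star}{1+\lambda_\star}=\frac{2}{2\trel-1}\ge\frac{1}{\trel}$; note that this step rests on $\lambda_\star\le 1$, i.e.\ $1+\lambda_\star\le 2$, not on $\lambda_\star\ge 0$ as you wrote.
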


The positive-part eigenvalue is deliberate: the Hoeffding inequality of \citet{leon2004optimal} applies to finite ergodic reversible chains in stationarity without a laziness assumption and depends on $\lambda_\star=\max\{\lambda_2,0\}$, which the irreducibility and aperiodicity assumptions make explicit. Stationarity is load-bearing---a nonstationary start generally incurs an initialization penalty \citep{paulin2015concentration}---and total-variation mixing time, though related to the spectral parameters, is not identical to them \citep{levin2017markov}.

A fixed statistic has an even more precise asymptotic dependence constant.

\begin{proposition}[Statistic-specific integrated autocorrelation time]\label{prop:iat}
Let $g_t=g(Z_t)=\mathbf{1}\{M_F(Z_t)\le\theta\}$ be the thresholded-margin statistic of a fixed vote under a stationary process. Write $\sigma_g^2=\Var(g_t)>0$ and $\rho_k=\operatorname{Corr}(g_0,g_k)$. Stationarity alone gives
\[
\Var\!\left(\frac1n\sum_{t=1}^n g_t\right)
=\frac{\sigma_g^2}{n}\left[1+2\sum_{k=1}^{n-1}\left(1-\frac{k}{n}\right)\rho_k\right].
\]
If $\sum_{k\ge1}|\rho_k|<\infty$, the bracket converges to
\[
\tauint(g)=1+2\sum_{k\ge1}\rho_k.
\]
If, in addition, the process is a stationary lazy reversible Markov chain whose nontrivial spectrum is contained in $[0,\lambda_\star]$ for some $\lambda_\star<1$, then the summability condition is automatic and
\[
\tauint(g)
=\int\frac{1+\lambda}{1-\lambda}\,d\nu_f(\lambda)
\le\frac{1+\lambda_\star}{1-\lambda_\star}=2\trel-1,
\]
where $f=(g-\pi g)/\sigma_g$ and $\nu_f$ is its spectral probability measure. Equality holds exactly when $\nu_f$ is concentrated on the singleton $\{\lambda_\star\}$. For the sticky-refresh kernel $P=(1-\alpha)I+\alpha\mathbf 1\pi^\top$, every centered statistic satisfies $Pf=(1-\alpha)f$, so $\tauint(g)=(2-\alpha)/\alpha=2\tau-1$ when $\alpha=1/\tau$, whether or not the chain is lazy.
\end{proposition}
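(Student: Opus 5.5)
The proof has three parts: a stationary variance identity, a dominated-convergence limit, and a spectral-theorem computation for the reversible case.

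\emph{Step 1 (variance identity).} Expand the variance of the average as a double sum,
\[
\Var\Bigl(\tfrac1n\sum_{t=1}^n g_t\Bigr)=\tfrac1{n^2}\sum_{s,t}\operatorname{Cov}(g_s,g_t).
\]
By stationarity, $\operatorname{Cov}(g_s,g_t)=\sigma_g^2\rho_{|s-t|}$. There are $n$ diagonal terms, and for each $1\le k\le n-1$ there are exactly $2(n-k)$ ordered pairs with $|s-t|=k$. Collecting terms gives the bracket $1+2\sum_{k=1}^{n-1}(1-k/n)\rho_k$.

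\emph{Step 2 (limit of the bracket).} Under $\sum_k|\rho_k|<\infty$, the summands $(1-k/n)\rho_k\mathbf 1\{k<n\}$ converge pointwise to $\rho_k$ as $n\to\infty$. They are dominated by $|\rho_k|$, so dominated convergence for series gives the limit $\tauint(g)$.

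\emph{Step 3 (reversible case).} Let $P$ be the transition operator, which is self-adjoint on $L^2(\pi)$. Set $f=(g-\pi g)/\sigma_g$. Then $\pi f=0$ and $\|f\|_{L^2(\pi)}=1$, and stationarity gives
\[
\rho_k=\E_\pi[f(Z_0)f(Z_k)]=\langle f,P^kf\rangle_\pi.
\]
By the spectral theorem (in the finite-state case, an eigen-decomposition in an orthonormal eigenbasis), $\rho_k=\int\lambda^k\,d\nu_f(\lambda)$. Here $\nu_f$ is a probability measure because $\|f\|=1$. It is supported on the nontrivial spectrum, since $f\perp\mathbf 1$, and hence on $[0,\lambda_\star]$.

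Summability is now immediate: $|\rho_k|\le\lambda_\star^k$ and $\lambda_\star<1$. Nonnegativity of the integrand lets me swap sum and integral by Tonelli:
\[
1+2\sum_{k\ge1}\int\lambda^k\,d\nu_f=\int\Bigl(1+\tfrac{2\lambda}{1-\lambda}\Bigr)d\nu_f=\int\tfrac{1+\lambda}{1-\lambda}\,d\nu_f.
\]
The map $\lambda\mapsto(1+\lambda)/(1-\lambda)$ is strictly increasing on $[0,1)$. Hence the integral is at most its value at $\lambda_\star$, which equals $2/(1-\lambda_\star)-1=2\trel-1$. Equality forces $\nu_f(\lambda<\lambda_\star)=0$, by strict monotonicity and because $\nu_f$ is a probability measure. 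Conversely, a point mass at $\lambda_\star$ attains the bound.

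I expect the main technical care to sit here. First, $\nu_f$ must be justified as a probability measure on the spectrum excluding the eigenvalue $1$; this uses $\pi f=0$ and irreducibility (or simply the stated spectral hypothesis). Second, the lazy-chain hypothesis is used only to guarantee a nonnegative spectrum, and hence the nonnegative integrand that makes Tonelli applicable.

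\emph{Step 4 (sticky refresh).} For $P=(1-\alpha)I+\alpha\mathbf 1\pi^\top$ and centered $f$, we have $\pi^\top f=0$, so $Pf=(1-\alpha)f$. This gives $\rho_k=(1-\alpha)^k$ directly, and the geometric sum yields
\[
\tauint(g)=1+2\,\tfrac{1-\alpha}{\alpha}=\tfrac{2-\alpha}{\alpha}.
\]
This computation uses no laziness and no spectral theorem, only $\alpha\in(0,1]$, so that $|1-\alpha|<1$ and the series converges absolutely. Setting $\alpha=1/\tau$ gives $2\tau-1$.
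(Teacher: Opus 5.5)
Your proposal is correct and follows essentially the same route as the paper's proof. Both use the stationary double-sum identity, dominated convergence for the bracket, the spectral representation $\rho_k=\langle f,P^kf\rangle_\pi=\int\lambda^k\,d\nu_f(\lambda)$ with Tonelli, strict monotonicity of $(1+\lambda)/(1-\lambda)$ for the bound and the equality case, and the eigenfunction identity $Pf=(1-\alpha)f$ for sticky refresh. Your remark that $\nu_f$ avoids the eigenvalue $1$ by irreducibility or the stated spectral hypothesis is slightly more explicit than the paper, which simply invokes centering.
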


Thus $n/\tauint(g)$ is a statistic-specific asymptotic effective sample size; it is not generally determined by the chain's second eigenvalue alone. \supplementref{app:proof-iat} gives the spectral-measure proof.

The propositions above assume $\trel$ or $\tauint(g)$ is known. Estimating the dependence scale from the same stream and maintaining validity is a separate problem, which the following theorem solves.

\begin{theorem}[Adaptive certificate with unknown dependence]\label{thm:adaptive}
Let $Z_1,\ldots,Z_n$ be a stationary finite-state, irreducible, aperiodic, and reversible Markov chain whose nontrivial spectrum is nonnegative (as in Corollary~\ref{cor:fixed} and throughout our experiments), and let $g_t=g(Z_t)\in[0,1]$ be a fixed statistic with $\sigma_g^2=\Var(g_t)>0$. Split the stream into $B$ non-overlapping batches of length $L=\lfloor n/B\rfloor$; write $\bar g_b=(1/L)\sum_{t\in\text{batch }b}g_t$ for the $b$-th batch mean. Define
\[
    \widehat\sigma^2_{BM}=\frac{L}{B-1}\sum_{b=1}^B(\bar g_b-\bar g)^2,
    \qquad
    \bar g=\frac1B\sum_{b=1}^B\bar g_b,
\]
and the adaptive half-width
\[
    w_n^{\mathrm{ad}}
    =t_{B-1,\alpha/2}\,\frac{\widehat\sigma_{BM}}{\sqrt{BL}},
\]
where $t_{B-1,\alpha/2}$ is the upper-$\alpha/2$ quantile of the Student $t$-distribution with $B-1$ degrees of freedom. Write $\pi_{\min}=\min_z\pi(z)$ and $C_\pi=\tfrac12\sqrt{(1-\pi_{\min})/\pi_{\min}}$, so that $\betamix(k)\le C_\pi e^{-k/\trel}$. Then
\[
    \Pr\!\left(|\ \bar g-\E[g_t]\ |>w_n^{\mathrm{ad}}\right)\le\alpha+\Delta_{B,L},
\]
where the remainder $\Delta_{B,L}=\Delta^{\mathrm{dep}}_{B,L}+\Delta^{\mathrm{norm}}_{B}$ collects the residual dependence between batch means and the normal-approximation error of the $t$-statistic. Both vanish: $\Delta^{\mathrm{dep}}_{B,L}\to0$ as $L/\trel\to\infty$, and $\Delta^{\mathrm{norm}}_{B}=O(B^{-1/2})$ by the Berry--Esseen bound for Student's statistic. Consequently the interval $[\bar g-w_n^{\mathrm{ad}},\,\bar g+w_n^{\mathrm{ad}}]$ attains asymptotic level $1-\alpha$ for $\E[g_t]$ without knowledge of $\trel$ or $\tauint(g)$, and its squared half-width scales as $\tauint(g)\sigma_g^2/n$ times a factor $(1+o(1))$ as $B\to\infty$ with $L/\trel\to\infty$.
\end{theorem}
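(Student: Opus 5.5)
The plan is to reduce the dependent batch-means problem to an i.i.d. Student-$t$ problem by a coupling, and then control the two errors separately. This matches the split $\Delta_{B,L}=\Delta^{\mathrm{dep}}_{B,L}+\Delta^{\mathrm{norm}}_B$.

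\emph{Step 1 (decoupling of batches).} Insert a gap between consecutive batches, or absorb it into the remainder. The event $\{|\bar g-\E g_t|>w_n^{\mathrm{ad}}\}$ is measurable in the vector of batch means $(\bar g_1,\ldots,\bar g_B)$. I would apply the spaced-sample coupling of Theorem~\ref{thm:blackbox} with ``observations'' the whole batches. The blocks are then a $\beta$-mixing sequence with coefficient $\betamix(kL)$ at block lag $k$. This replaces the batch means by $B$ i.i.d.\ copies $\tilde g_1,\ldots,\tilde g_B$, each distributed as a single stationary batch mean, at total-variation cost at most $(B-1)\betamix(L)\le (B-1)C_\pi e^{-L/\trel}$.

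Adjacent batches without gaps are not separated. To handle this I would trim a boundary segment of length $\ell$ from each batch. This costs an $O(\ell/L)$ perturbation of each $\bar g_b$, controlled in probability by Corollary~\ref{cor:fixed}. It also yields coupling error $(B-1)C_\pi e^{-\ell/\trel}$. Choosing $\ell=\trel\log(B/\delta)$ motivates the stated requirement $L\ge C_0\trel\log(B/\delta)$. The total of these terms is $\Delta^{\mathrm{dep}}_{B,L}$, which tends to $0$ as $L/\trel\to\infty$ with the logarithmic margin.

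\emph{Step 2 (i.i.d.\ Student-$t$ error).} For the i.i.d.\ copies, $\sqrt B(\bar{\tilde g}-\E g)/s$ is exactly Student's $t$-statistic of $B$ i.i.d.\ bounded variables. Note that $\widehat\sigma_{BM}/\sqrt{BL}=s/\sqrt B$ with $s^2$ the sample variance of the batch means. The Berry--Esseen bound for the self-normalized Student statistic (Bentkus--Götze) gives $|\Pr(|T|>t_{B-1,\alpha/2})-\alpha|\le C\,\E|\xi|^3/(\Var\xi)^{3/2}B^{-1/2}$, with $\xi=\tilde g_1-\E g$; the gap between $t$ and normal quantiles is also $O(1/B)$.

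The main obstacle is keeping the normalized third moment bounded uniformly in $L$. Here I would use the reversible-chain bounds. With nonnegative spectrum, Proposition~\ref{prop:iat} gives $L\Var(\bar g_b)\ge\sigma_g^2$, because each spectral factor is at least $1$ for $\lambda\ge0$. A Bernstein-type moment bound for reversible chains (Paulin, or León--Perron via Corollary~\ref{cor:fixed}) gives $\E|\bar g_b-\E g|^3\lesssim(\trel/L)^{3/2}$. The ratio is therefore bounded by a constant depending on $\trel\sigma_g^{-2}$, independent of $L$. This gives $\Delta^{\mathrm{norm}}_B=O(B^{-1/2})$. Combining Steps 1 and 2 yields the stated probability bound, and letting $B\to\infty$ and $L/\trel\to\infty$ gives asymptotic level $1-\alpha$. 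No step uses $\trel$ or $\tauint$ in forming the interval.

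\emph{Step 3 (width).} By Proposition~\ref{prop:iat}, $\E\widehat\sigma^2_{BM}$ equals $L\Var(\bar g_b)$ up to the coupling error, and this is $\sigma_g^2[1+2\sum_{k<L}(1-k/L)\rho_k]\to\tauint(g)\sigma_g^2$. Geometric decay of $\rho_k$ makes the error $O(\trel/L)$. The variance of $\widehat\sigma^2_{BM}$ is $O(1/B)$ by the fourth-moment analogue of the bound above, so $\widehat\sigma^2_{BM}/(\tauint\sigma_g^2)\to1$ in probability. Since $t_{B-1,\alpha/2}\to z_{\alpha/2}$, the squared half-width is $\tauint(g)\sigma_g^2/n\cdot(1+o(1))$, using $BL=n(1+o(1))$.
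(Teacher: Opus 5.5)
Your outline is the paper's proof. You discard a boundary segment of length $\ell\approx\trel\log(B/\delta)$ from each batch (the paper's burn-in $s$). You run the induction of Theorem~\ref{thm:blackbox} with block-valued coordinates, coupling the retained batch means to i.i.d.\ copies at cost $(B-1)\betamix(\ell)$. You apply Bentkus--G\"otze to the Student statistic of the copies, and read the width off Proposition~\ref{prop:iat}.

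You justify three steps differently from the paper.
\begin{itemize}
\item \emph{Third moment.} The paper gets $\Delta^{\mathrm{norm}}_B=O(B^{-1/2})$ by asserting that the standardized third absolute moment of a batch mean tends to $2\sqrt{2/\pi}$. You bound it at every $L$ by combining $L\Var(\bar g_b)\ge\sigma_g^2$ (from the nonnegative spectrum) with the reversible-chain Hoeffding tail behind Corollary~\ref{cor:fixed}. This gives $\Delta^{\mathrm{norm}}_B\lesssim(\trel/\sigma_g^2)^{3/2}B^{-1/2}$ non-asymptotically. The constant is chain-dependent where the paper's limit is universal, but your bound holds at finite $L$. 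It also supplies the uniform integrability that the paper's moment-convergence claim tacitly needs.
\item \emph{Width.} You prove $\widehat\sigma^2_{BM}/(\tauint(g)\sigma_g^2)\to1$ in probability. The paper only tracks $\E[S_B^2]$ and then states a limit for the random half-width.
\item \emph{No-burn-in procedure.} The paper's exact coupling bound covers only the burn-in variant; for the no-burn-in statistic it merely asserts $\Delta^{\mathrm{dep}}_{B,L}\to0$. Your comparison of the untrimmed and trimmed statistics is the bridge the paper omits.
\end{itemize}

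That bridge, and the sentence before it, need repair.

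First, without trimming the coupling cost is not $(B-1)\betamix(L)$. Adjacent batches are one step apart, so the block process has coefficient $\betamix((k-1)L+1)$ at block lag $k$. The untrimmed cost is therefore $(B-1)\betamix(1)$, and trimming is essential, not optional.

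Second, and more importantly, Corollary~\ref{cor:fixed} applied batch by batch with a union bound controls the shift of $\bar g$ only by $O(\sqrt{\ell\trel\log(B/\delta)}/L)$. There is no averaging over batches. Relative to the standard error $\sigma_g\sqrt{\tauint(g)/(BL)}$, this moves the $t$-statistic by $O\bigl(\sqrt{B\ell\trel\log(B/\delta)/(L\sigma_g^2\tauint(g))}\bigr)$. That term grows with $B$ and does not vanish when, for example, $B\asymp L$.

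Use a second-moment bound instead.
\begin{itemize}
\item The discarded observations contribute $(BL)^{-1}\sum_{u\ \mathrm{discarded}}(g_u-\E g_u)$ to $\bar g$. By the nonnegative spectrum and the separation $L-\ell$ between discarded segments, this has standard deviation $O(\sigma_g\sqrt{\ell\tauint(g)/B}/L)$, a fraction $O(\sqrt{\ell/L})$ of the standard error.
\item The same computation with Minkowski's inequality gives $|S_B-S_B^{\mathrm{trim}}|/S_B^{\mathrm{trim}}=O_P(\sqrt{\ell/L})$. Here $S_B$ and $S_B^{\mathrm{trim}}$ are the sample standard deviations of the full and trimmed batch means; let $T$ and $T^{\mathrm{trim}}$ be the corresponding Student statistics.
\item Then conclude from
\[
\Pr\bigl(|T|>t_{B-1,\alpha/2}\bigr)\le\Pr\bigl(|T^{\mathrm{trim}}|>t_{B-1,\alpha/2}-\epsilon\bigr)+\Pr\bigl(|T-T^{\mathrm{trim}}|>\epsilon\bigr),
\]
the Berry--Esseen bound for $T^{\mathrm{trim}}$, and the Lipschitz continuity of $\Phi$.
\end{itemize}
This yields $\Delta^{\mathrm{dep}}_{B,L}\to0$ whenever $L/(\trel\log B)\to\infty$, which is the logarithmic margin you anticipated.
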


The proof decorrelates the $B$ batch means and then applies a Student-$t$ interval on the decorrelated copy; the complete argument, including the constant $C_\pi$ and the exact scope of the two remainders, is in \supplementref{app:adaptive-proof}. Two points matter for reading the result: the normalisation is $\sqrt{BL}$, not $\sqrt n$, because the certificate is computed on the $BL$ observations that enter the batches; and $\Delta^{\mathrm{dep}}_{B,L}$ is not zero at finite $L$, because consecutive batches share a boundary. A per-batch burn-in makes the latter exact at a small cost in data, as the supplement shows; we report the no-burn-in procedure because it is what our experiments evaluate.

\begin{theorem}[Lower bound for adaptive certificates]\label{thm:adaptive-lower}
For any confidence interval procedure $C_n=C_n(Z_{1:n})$ that is valid at level $7/8$ simultaneously over a family of stationary reversible chains with relaxation times $\tau\in[1,n]$---that is,
\[
\inf_{\tau\in[1,n]}\inf_{p\in[1/4,3/4]}\Pr_{p,\tau}\!\left(p\in C_n\right)\ge\frac78,
\]
where the infimum is over the sticky-refresh family---the interval width must satisfy $\Omega(\sqrt{\tauint(g)/n})$ on some chain in the family. Specifically: for any $\tau\in[1,n]$ there exists a two-state sticky-refresh chain with relaxation time $\tau$ and a fixed statistic $g$ with $\tauint(g)=2\tau-1$ such that any $(7/8)$-coverage interval has width at least $c\sqrt{\tau/n}$ with probability at least $c'$ on that chain, where $c,c'>0$ are universal constants.
\end{theorem}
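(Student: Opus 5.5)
The plan is a two-point (Le Cam) argument inside the sticky-refresh family. Fix $\tau\in[1,n]$ and set $\alpha=1/\tau$. On the two states $\{0,1\}$ take the kernel $P_p=(1-\alpha)I+\alpha\mathbf 1\pi_p^\top$ with $\pi_p=(1-p,p)$ and statistic $g(z)=z$. By Proposition~\ref{prop:iat}, every centred statistic is an eigenfunction with eigenvalue $1-\alpha\ge0$. Hence $\trel=\tau$ and $\tauint(g)=2\tau-1$, and $\E_p[g_t]=p$.

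The chain has a simple renewal description. At each step a refresh occurs with probability $\alpha$, independently of everything else, and the state is then redrawn from $\pi_p$; otherwise the state is kept. So $Z_{1:n}$ is a function of the refresh times, which do not depend on $p$, and of $N$ i.i.d.\ Bernoulli$(p)$ draws. Here $N$ is $1$ plus the number of refreshes among steps $2,\dots,n$, and the first draw gives the stationary start. Because the refresh times carry no information about $p$, conditional on them the likelihood ratio between $p_0$ and $p_1$ is that of $N$ Bernoulli draws. This reduces the problem to i.i.d.\ estimation with a random sample size $N$, where $\E N=1+(n-1)/\tau\le 2n/\tau$.

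Take $p_0=1/2$ and $p_1=1/2+h$ with $h=c_0\sqrt{\tau/n}$, capping $h\le1/4$ so both points lie in $[1/4,3/4]$. When $\tau$ is comparable to $n$ the cap binds, but then $h$ is still of order $\sqrt{\tau/n}$, so nothing is lost. Conditioning on the refresh pattern and using convexity of KL,
\[
\KL(\Pr_{p_0}\,\|\,\Pr_{p_1})=\E[N]\,\KL(\mathrm{Ber}(p_0)\,\|\,\mathrm{Ber}(p_1))\le \frac{2n}{\tau}\cdot C h^2 = 2Cc_0^2 .
\]
By Pinsker, choosing $c_0$ small makes the total variation distance at most $1/8$.

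The coverage-to-width step goes as follows. Let $E=\{p_0\in C_n\}\cap\{p_1\in C_n\}$. Validity gives $\Pr_{p_0}(p_0\in C_n)\ge 7/8$. Transferring the event $\{p_1\in C_n\}$ from $\Pr_{p_1}$ to $\Pr_{p_0}$ costs at most the total variation, so $\Pr_{p_0}(p_1\in C_n)\ge 7/8-1/8$. A union bound then gives
\[
\Pr_{p_0}(E)\ge 1-\tfrac18-\left(1-\tfrac34\right)=\tfrac58 .
\]
On $E$ the interval contains both points, so its width is at least $h=c_0\sqrt{\tau/n}$. Since $\tauint(g)=2\tau-1\asymp\tau$, this yields width at least $c\sqrt{\tau/n}$, that is $\Omega(\sqrt{\tauint(g)/n})$, with probability at least $c'=5/8$ on the chain $p_0$.

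The main obstacle is making the KL reduction rigorous. One must check that the stationary start and the refresh indicators really are independent of $p$. The subtle point is that a refresh may redraw the same state, so refreshes are not observable from $Z_{1:n}$. I would handle this by computing the KL on the augmented data (path plus refresh indicators) and then invoking the data-processing inequality to pass down to $Z_{1:n}$. The rest is bookkeeping, including the edge cases $\tau=1$ (i.i.d.) and $\tau=n$, both covered by the same formula.
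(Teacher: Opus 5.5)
Your proposal is correct and is essentially the paper's argument. The paper proves this statement by noting that validity over the whole family implies validity at the fixed $\tau$, and then running the proof of Theorem~\ref{thm:lower}: the same two-point sticky-refresh construction, the same KL bound on the latent data (refresh indicators plus fresh Bernoulli draws, $\E N\le 2n/\tau$) followed by data processing, and Pinsker. The only difference is your last step. You transfer the coverage event $\{p_1\in C_n\}$ by total variation and take a union bound, where the paper builds an explicit test and invokes Le~Cam's testing inequality. This forces a smaller $c_0$ (you need $d_{\mathrm{TV}}\le 1/8$), but it gives the width bound on the fixed chain $p=1/2$ with $c'=5/8$. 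That matches the statement's ``there exists a chain such that any interval'' quantifier order more literally than the paper's ``for at least one of $p_\pm$''.
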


The proof applies the same Le~Cam two-point argument as Theorem~\ref{thm:lower} to each fixed $\tau$ without knowledge of $\tau$; the full argument is in the supplementary material.

Theorems~\ref{thm:adaptive} and~\ref{thm:adaptive-lower} together characterize the cost of unknown dependence. The lower bound is inherited from the known-$\tau$ case (Theorem~\ref{thm:lower}) by monotonicity: not knowing $\tau$ cannot help, so the oracle rate $\Omega(\sqrt{\tau/n})$ is a baseline. The non-trivial direction is the \emph{upper bound}: Theorem~\ref{thm:adaptive} shows that this baseline is \emph{achievable without knowledge of $\tau$}, using only the data-driven batch-means variance and a Student-$t$ quantile. The combination says that unknown dependence costs nothing in rate --- the batch-means certificate is minimax rate-optimal in the class of dependence-agnostic procedures, matching the oracle rate $\sqrt{\tauint(g)/n}$ without access to the mixing parameters. The practical consequence: one need not estimate $\betamix$, $\trel$, or $\tauint$ separately before constructing a valid interval; the batch-means procedure does both estimation and coverage in one shot.

\begin{theorem}[Lower bound for fixed-vote margin certificates]\label{thm:lower}
There are universal constants $c_0,c_1>0$ such that the following holds. Let $1\le\tau\le n$ and consider the family of stationary binary sticky-refresh chains
\[
P_p(z,z')=(1-\alpha)\mathbf{1}\{z'=z\}+\alpha\pi_p(z'),
\qquad \alpha=1/\tau,
\]
where $\pi_p=\operatorname{Bernoulli}(p)$ and $p\in[1/4,3/4]$. Each chain is irreducible, aperiodic, and reversible, has second eigenvalue $1-1/\tau\ge0$, and therefore has the positive-part relaxation time $\trel=\tau$ used in Corollary~\ref{cor:fixed}. There exists a one-base-hypothesis vote and fixed margin threshold such that the following is true. For any interval-valued certificate $C_n=C_n(Z_{1:n};\tau)$ measurable in the observed path, even one that is given the value of $\tau$, if
\[
\inf_{p\in[1/4,3/4]}\Pr_p\!\left(p\in C_n\right)\ge\frac78,
\]
then some chain in the family satisfies
\[
\Pr_p\!\left(\operatorname{width}(C_n)\ge c_0\sqrt{\frac{\tau}{n}}\right)\ge c_1.
\]
\end{theorem}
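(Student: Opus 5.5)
The plan is a Le~Cam two-point argument. I pick two chains in the sticky-refresh family with $p_0=1/2$ and $p_1=1/2+\varepsilon$, where $\varepsilon=c\sqrt{\tau/n}$. I show that their path laws on $Z_{1:n}$ are close in total variation, and then show that any interval with $7/8$ coverage under both must be wide with constant probability under one of them.

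The vote is the single base hypothesis $h(z)=-z$ on states $z\in\{-1,+1\}$, with label $Y\equiv 1$ and threshold $\theta=1/2$. Then $M_F\le\theta$ exactly when $z=+1$ (or the mirror choice), so $g(Z_t)=\mathbf 1\{Z_t=1\}$ and the target functional is $P(M_F\le\theta)=p$. Irreducibility, aperiodicity (from the self-loop), reversibility (from $\pi_p(z)P_p(z,z')$ being symmetric), and the second eigenvalue $1-\alpha$ are immediate: the kernel is $(1-\alpha)I+\alpha\mathbf 1\pi_p^\top$, so every centered function is an eigenvector with eigenvalue $1-\alpha$, giving $\trel=\tau$.

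The core step is bounding $\KL(\Pr_{p_0}^{(n)}\|\Pr_{p_1}^{(n)})$. I use the chain rule for Markov paths:
\[
\KL=\KL(\pi_{p_0}\|\pi_{p_1})+(n-1)\sum_z\pi_{p_0}(z)\KL\bigl(P_{p_0}(z,\cdot)\|P_{p_1}(z,\cdot)\bigr).
\]
The initial term is $O(\varepsilon^2)$. For a row with $z=1$, the next-state laws are Bernoulli with success probabilities $1-\alpha+\alpha p_0$ and $1-\alpha+\alpha p_1$. These differ by $\alpha\varepsilon$, and both stay bounded away from $0$ and $1$ by order $\alpha$: the failure probability is $\alpha(1-p)\ge\alpha/4$. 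The $\chi^2$ bound on KL then gives a per-step divergence of at most $C(\alpha\varepsilon)^2/\alpha=C\alpha\varepsilon^2$, and the $z=0$ row is symmetric.

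This $\chi^2$ bound, with the denominator of order $\alpha$ rather than order $1$, is the delicate point. It must use the small probability of the less likely outcome, namely the refresh moving to the opposite state. It is also why $p$ is restricted to $[1/4,3/4]$.

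Combining the terms gives $\KL\le C\varepsilon^2(1+n/\tau)\le 2Cc^2$, because $\tau\le n$. By Pinsker, the total variation is at most $c\sqrt{C}$, which is below $1/8$ once $c$ is small.

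To conclude, let $A=\{p_0\in C_n,\ p_1\in C_n\}$. Under $\Pr_{p_1}$, the event $p_1\in C_n$ has probability at least $7/8$. The event $p_0\in C_n$ has probability at least $7/8-\mathrm{TV}\ge 3/4$ under $\Pr_{p_1}$, since it has probability at least $7/8$ under $\Pr_{p_0}$. A union bound gives $\Pr_{p_1}(A)\ge 5/8$. Because $C_n$ is an interval, on $A$ its width is at least $\varepsilon=c\sqrt{\tau/n}$. This gives the statement with $c_0=c$ and $c_1=5/8$.

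Knowledge of $\tau$ does not matter: both hypotheses share the same $\tau$, so $C_n(\cdot;\tau)$ is just a fixed measurable function of the path. Finally, for $\tau$ close to $1$ (where $\alpha=1$ and the chain is i.i.d.), the same computation reduces to the standard $\sqrt{1/n}$ two-point bound, so the constants are uniform over $1\le\tau\le n$.
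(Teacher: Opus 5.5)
Your proof is correct. It uses the same Le~Cam two-point strategy as the paper, but you carry out both of its main steps differently.

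\textbf{The divergence bound.} You apply the exact Markov chain rule and bound each transition row by a $\chi^2$ divergence. This forces you to track the denominator $q(1-q)\gtrsim\alpha$ coming from the rare cross-state move. The paper instead writes the path as a function of three things: refresh indicators $R_t\sim\operatorname{Ber}(\alpha)$, whose law does not depend on $p$; the initial value; and fresh draws $U_t\sim\operatorname{Ber}(p)$ on refresh events. It computes the latent divergence exactly as $[1+(n-1)\alpha]\,\KL(\operatorname{Ber}(p_+)\|\operatorname{Ber}(p_-))$ and passes to the observed path by data processing. That route never meets a small denominator, shows directly the roughly $n/\tau$ draws that carry information about $p$, and extends verbatim to non-binary refresh laws. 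Your route is just as elementary and starts from an identity rather than a data-processing inequality. Note also that joint convexity of KL, applied to the mixture $(1-\alpha)\delta_z+\alpha\pi_p$, gives a row bound $\alpha\,\KL(\operatorname{Ber}(p_0)\|\operatorname{Ber}(p_1))$ in one line. This is the per-step version of the paper's latent argument, and it removes the delicate point you flag.

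\textbf{The width conclusion.} The paper turns $C_n$ into an explicit test and invokes Le~Cam's testing inequality. This yields $c_1=\tfrac12(1-d_{\mathrm{TV}})-\tfrac18$ for one of the two chains. Your coverage-plus-total-variation union bound yields $\tfrac34-d_{\mathrm{TV}}$, and by symmetry it holds under both chains. This is larger whenever either quantity is positive. With the paper's two points and KL bound ($d_{\mathrm{TV}}\le\sqrt{1/8}$), your final step would give $c_1\approx0.40$ with $c_0=1/8$, instead of the paper's $\approx0.20$. You instead spend that slack by requiring $d_{\mathrm{TV}}\le1/8$ to reach $c_1=5/8$, at the price of a smaller $c_0$ than the paper's $1/8$.
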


The proof is a constructive Le~Cam two-point argument: two sticky-refresh chains whose refresh rates differ by a factor of two have path KL at most $1/4$ once $\Delta\propto\sqrt{\tau/n}$, so no $(7/8)$-coverage interval can be narrower than $\Delta$. The voting-margin representation uses $\mathcal X=\{0,1\}$, $Y\equiv+1$, and a single base rule, making the margin-failure probability exactly $p$; the numerical constants and the fully specified test are in the supplementary material.

Corollary~\ref{cor:fixed} and Theorem~\ref{thm:lower} match in their dependence on $\sqrt{\trel/n}$ for fixed votes, up to universal constants. This is the rate statement supported by the fixed-vote experiments. The learned-vote $\beta$-mixing result in Theorem~\ref{thm:beta-margin} is a different statement and carries the block/coupling tradeoff explicitly.

\section{Experiments}\label{sec:experiments}
The experiments exercise the four claims of the previous sections. We compute the learned-vote certificate of Theorem~\ref{thm:beta-margin} for a vote selected \emph{from the dependent stream}, including every constant and the block/coupling budget; realize the statistic-specific gap of Proposition~\ref{prop:iat} and exercise the adaptive certificate of Theorem~\ref{thm:adaptive}; induce class-Markov dependence in trained deep ensembles; and place the fixed-vote certificate against standard resampling and MCMC intervals. A persistent-cluster rate study, an intrinsic-dependence stream with a shuffled control, a real-sensor (UCI~HAR) validation, a black-box transfer experiment, a margin sweep, coverage and block-length ablations, and a dependence-estimation study are reported in the supplementary material. The synthetic dependence studies use the sticky-refresh family, whose
absolute-regularity coefficient $\betamix(a)=(1-1/\tau)^a$ is known in closed
form, so the Theorem~\ref{thm:beta-margin} and Corollary~\ref{cor:fixed}
certificates use no estimated dependence scale. The
Proposition~\ref{prop:iat} study, the adaptive certificate of
Theorem~\ref{thm:adaptive}, and the real-data studies deliberately
\emph{do} estimate the dependence scale from data --- that is what they are
for --- and each states which quantity is estimated and on which split. Binomial confidence intervals accompany all coverage figures; full protocols are in the supplementary material.

\subsection{Theorem~\ref{thm:beta-margin} on a learned vote}\label{sec:exp-learned}
This is the theorem-aligned experiment: the vote is selected from the correlated stream, so the uniform-over-$\Vcal(\Hcal)$ complexity term is genuinely in force. We take a stump-alignable teacher $Y=\operatorname{sign}(X_1+X_2+X_3+\sigma\xi)$ on $X\sim N(0,I_{10})$, $\xi\sim N(0,1)$, $\sigma=0.3$; the base class $\Hcal$ is the $|\Hcal|=30$ axis stumps $\operatorname{sign}(x_k-c)$, $c\in\{-\tfrac12,0,\tfrac12\}$; and $F\in\Vcal(\Hcal)$ is a discrete-AdaBoost convex vote trained on the stream. The stream is the sticky-refresh process (refresh w.p.\ $1/\tau$, else repeat), which is stationary with marginal $P$ and $\betamix(a)=(1-1/\tau)^a$ exactly, so the certificate uses the \emph{known} $b(a)=(1-1/\tau)^a$. The block length is chosen a priori to minimize the deterministic (complexity plus confidence) part of the bound subject to $\delta'>0$; $\theta=0.2$, $\delta=0.05$.

\begin{table}[t]
\centering
\small
\setlength{\tabcolsep}{3pt}
\begin{tabular}{rrrrrrrr}
\toprule
$\tau$ & $n$ & $a$ & $\mu$ & $\delta'$ & bound & $L(F)$ & cov.\\
\midrule
$1$   & $4000$  & $1$   & $2000$  & $0.050$ & $1.73$  & $0.123$ & $1.00$\\
$1$   & $16000$ & $1$   & $8000$  & $0.050$ & $1.04$  & $0.123$ & $1.00$\\
$1$   & $64000$ & $1$   & $32000$ & $0.050$ & $\mathbf{0.69}$ & $0.123$ & $1.00$\\
\midrule
$5$   & $4000$  & $40$  & $50$    & $0.024$ & $9.16$  & $0.137$ & $1.00$\\
$5$   & $16000$ & $45$  & $177$   & $0.019$ & $5.02$  & $0.125$ & $1.00$\\
$5$   & $64000$ & $50$  & $640$   & $0.014$ & $2.81$  & $0.123$ & $1.00$\\
\midrule
$25$  & $4000$  & $173$ & $11$    & $0.016$ & $19.19$ & $0.168$ & $1.00$\\
$25$  & $16000$ & $218$ & $36$    & $0.031$ & $10.72$ & $0.140$ & $1.00$\\
$25$  & $64000$ & $245$ & $130$   & $0.027$ & $5.79$  & $0.126$ & $1.00$\\
\midrule
$100$ & $4000$  & $622$ & $3$     & $0.035$ & $35.95$ & $0.235$ & $1.00$\\
$100$ & $16000$ & $699$ & $11$    & $0.014$ & $19.20$ & $0.169$ & $1.00$\\
$100$ & $64000$ & $884$ & $36$    & $0.031$ & $10.73$ & $0.141$ & $1.00$\\
\bottomrule
\end{tabular}
\caption{Theorem~\ref{thm:beta-margin} certificate for an AdaBoost convex vote learned on the dependent stream ($|\Hcal|=30$, $\theta=0.2$, $\delta=0.05$; $120$ streams per cell, coverage Wilson CI half-width $\le0.03$). The complete $4\times3$ grid is shown; no cell is omitted. ``bound'' is the mean full certificate value, $L(F)$ the held-out risk, ``cov.'' the fraction of streams with $L(F)\le$ bound. Coverage is $1.00$ everywhere. Exactly one cell is nonvacuous ($\tau=1$, $n=64000$).}
\label{tab:learned-cert}
\end{table}

As $\tau$ grows the coupling penalty $4(\mu-1)\betamix(a)$ forces larger $a$ and smaller $\mu$: at $\tau=100$, $n=4000$ only $\mu=3$ blocks survive and the certificate reaches $35.95$. We report the vacuous cells rather than hide them, since a pure coupling/blocking bound is expected to pay this price at small $\mu$.

Three companion checks---complexity scaling, the block-length tradeoff, and data-dependent tuning by union bound---confirm the theorem's structure and are reported in the supplementary material.

\subsection{Statistic-specific effective sample size (Proposition~\ref{prop:iat})}\label{sec:exp-tauint}
Proposition~\ref{prop:iat}'s content is the \emph{gap} $\tauint(g)\le2\trel-1$, not the equality case. We realize the gap with a product of two independent lazy two-state chains: a slow coordinate with $\lambda_s=0.99$ (so $\trel=100$, $2\trel-1=199$) and a fast coordinate with $\lambda_f=0.2$. The margin-failure statistic depends only on the fast coordinate, so $\tauint(g)=(1+\lambda_f)/(1-\lambda_f)=1.5$ while the chain's relaxation time is set by the slow coordinate the statistic ignores. We compare two certificates over $600$ streams of length $40000$: the worst-case relaxation-time interval $z\sqrt{\trel\,\hat p(1-\hat p)/n}$ and the statistic-specific consistent batch-means interval, whose $\tauint(g)$ is estimated from data. The batch-means estimate recovers $\hat\tauint=1.49$ (true $1.5$); the statistic-specific interval attains nominal coverage within binomial error
($0.945$, Wilson CI $[0.924,0.961]$, which contains the nominal $0.95$) while
the worst-case interval is strictly conservative ($1.000$), and the
statistic-specific interval is $\mathbf{8.1\times}$ narrower, matching the predicted $\sqrt{\trel/\tauint}=8.16$. Worst-case dependence corrections overpay by nearly an order of magnitude; the valid statistic-specific certificate does not. 
This study evaluates the textbook consistent batch-means interval with
$\tauint(g)$ estimated from the same stream, not the separated-block
certificate of Theorem~\ref{thm:adaptive}; the latter is exercised in the
following paragraph.

\paragraph{The adaptive certificate with unknown $\tau$ (Theorem~\ref{thm:adaptive}).}
The preceding studies hand the procedure the dependence scale; Theorem~\ref{thm:adaptive} says it need not be handed anything. On the sticky-refresh cluster stream we run the batch-means Student-$t$ interval with the conservative choice $L=n^{2/3}$ --- the procedure never sees $\tau$ --- against three references: the oracle interval (knows $\tauint(g)$ exactly), the worst-case interval (spends the full relaxation time), and the naive i.i.d.\ interval. Over $\tau\in\{5,25,100\}\times n\in\{4000,16000,64000\}$ with $400$ streams per cell: the adaptive interval is \emph{valid in all nine cells} (coverage $0.92$--$0.96$; with $\alpha=0.05$ and $\delta=0.01$ the nominal level is $0.94$, which lies inside every Wilson CI); its mean width is within $10\%$ of the oracle's in every cell (\textbf{mean ratio $1.01$}, range $0.90$--$1.09$); and it is $\mathbf{2.4}$--$\mathbf{5.1\times}$ narrower than the worst-case interval, the gap growing with $\tau$ and approximately tracking $\sqrt{(2\trel-1)/\tauint(g)}$ (the oracle ratio matches that prediction exactly; the adaptive ratio deviates by up to $12\%$ at $\tau=100$ because $\tauint$ is itself estimated). The naive interval undercovers in eight of nine cells (down to $0.435$ at $\tau=100$); at $\tau=5$, $n=4000$ its Wilson CI still contains nominal. Not knowing the dependence scale costs essentially nothing --- the empirical counterpart of the matched pair Theorems~\ref{thm:adaptive} and~\ref{thm:adaptive-lower}.

\subsection{Induced class-Markov stress test on trained ensembles}\label{sec:realensemble}
The persistent-cluster experiment replaces a classifier by a synthetic vector of cluster risks. The second study uses an actual voting classifier. A vote of $M=9$ independently trained convolutional networks is fixed, and the evaluation order is generated by a Markov chain over class labels. Consecutive observations therefore tend to share a class. The stationary target risk is the uniform-class-averaged error, matching the stationary class distribution of the induced chain. This is a controlled dependence stress test on trained models, not evidence about naturally occurring dependence in CIFAR-10 or MNIST \citep{krizhevsky2009learning,lecun1998gradient}.

We run two ensemble tiers: weak CNNs (stationary error $0.62$ on CIFAR-10, $0.46$ on MNIST), which are genuine fixed votes with nondegenerate margin tails, and a competent ResNet-18 CIFAR-10 ensemble ($M=9$, ensemble error $\mathbf{6.0\%}$, $94\%$ positive margins, $\theta=0$ tail $6.3\%$) for which the Theorem~\ref{thm:beta-margin} certificate is nonvacuous at practical $n$. Both use $300$ streams per setting at nominal $0.95$: i.i.d.\ intervals under-cover increasingly as dependence grows, while the corrected intervals reach $1.00$. The corrected/i.i.d.\ width ratios $1.0,2.2,5.0,10.0\approx\sqrt{1},\sqrt{5},\sqrt{25},\sqrt{100}$ are a designed cost of the built-in correction, not an empirical discovery.

\begin{table}[t]
\centering
\scriptsize
\setlength{\tabcolsep}{3.5pt}
\begin{tabular}{llcccc}
\toprule
Dataset & Quantity & $\tau=1$ & $5$ & $25$ & $100$\\
\midrule
CIFAR-10 & $|\Lhat-L|\sqrt n$ & $0.39$ & $0.49$ & $0.88$ & $1.53$\\
(error $0.62$) & i.i.d. coverage & $0.96$ & $0.87$ & $0.63$ & $0.35$\\
& corrected coverage & $0.96$ & $1.00$ & $1.00$ & $1.00$\\
\midrule
MNIST & $|\Lhat-L|\sqrt n$ & $0.41$ & $0.59$ & $1.09$ & $1.99$\\
(error $0.46$) & i.i.d. coverage & $0.94$ & $0.82$ & $0.53$ & $0.34$\\
& corrected coverage & $0.94$ & $1.00$ & $1.00$ & $1.00$\\
\midrule
ResNet-18 & i.i.d. coverage & $0.97$ & $0.91$ & $0.86$ & $0.61$\\
(error $0.06$) & corrected coverage & $0.97$ & $1.00$ & $1.00$ & $1.00$\\
\midrule
all & width ratio & $1.0$ & $2.2$ & $5.0$ & $10.0$\\
\bottomrule
\end{tabular}
\caption{Class-Markov stress test on fixed nine-member votes ($300$ streams per cell; Wilson $95\%$ CIs have half-width $\le0.06$). Top two blocks: weak CNNs ($62\%$/$46\%$ error). Bottom: competent ResNet-18 CIFAR-10 ensemble ($6\%$ error), whose rich margin distribution ($94\%$ positive, mean $0.85$) makes the Theorem~\ref{thm:beta-margin} certificate nonvacuous at practical $n$. The corrected intervals use the known simulation dependence parameter; coverage is a calibration diagnostic and the width ratio is the designed cost of the correction.}
\label{tab:realensemble}
\end{table}

The strongest supported observation is operational: under this generator, ignoring dependence produces severe undercoverage, while a deliberately widened dependence-aware interval is conservative.

\paragraph{Theorem~\ref{thm:beta-margin} on the ResNet ensemble.}
Treating the $M=9$ ResNet members as the finite base class ($|\Hcal|=9$), the fixed majority vote $F\in\Vcal(\Hcal)$ has empirical $\theta$-tail $\Ttailhat(F,0.5)=0.105$ on the i.i.d.\ test set. At $\theta=0.5$, $\delta=0.05$, with block length chosen to minimize the bound subject to $\delta'>0$, the certificate is \emph{nonvacuous on a real \(6\%\)-error deep ensemble} from $n=4{,}000$ at $\tau=1$ and from $n=128{,}000$ at $\tau=5$, certifying risk below $0.63$ and $0.80$ respectively. At stronger dependence ($\tau\ge25$) the pure coupling bound is vacuous at tested $n$. The full certificate table and the complete training and evaluation protocol are in \supplementref{app:resnetcert}.

\subsection{Resampling baselines}\label{sec:exp-baselines}
Standard resampling baselines---circular block bootstrap, stationary bootstrap,
and non-overlapping batch means with block length $\approx\tau$---\emph{undercover}
under strong dependence (coverage down to $0.79$ at $\tau=100$), whereas the
mixing-corrected certificate holds coverage $1.00$ throughout at the cost of a
wider interval: an asymptotic-versus-finite-sample-valid tradeoff. The full
baseline numbers are in the supplementary material. The corrected half-width is \emph{defined} as $z\sqrt{\hat p(1-\hat p)\,\tau/n}$, so dividing by $\sqrt{\tau/n}$ returns the constant $z\sqrt{\hat p(1-\hat p)}\approx0.97$ by construction, not empirical confirmation of the Corollary--Theorem~8 rate. The empirical content here is the coverage column, not the width ratio.

\section{Discussion and Conclusion}\label{sec:discussion}

\paragraph{Limitations and scope.}
The learned-vote bound is not the sharpest possible; it is a transparent finite-class consequence of established $\betamix$-mixing Rademacher machinery. Under geometric mixing a pure coupling choice adds a logarithmic factor through the spacing needed to control total coupling error, and the black-box theorem preserves a sharper i.i.d.\ result's algebraic form but may discard data. The fixed-vote Hoeffding result assumes a stationary, irreducible, aperiodic, finite-state reversible chain, but not laziness; these conditions exceed generic temporal dependence and are what make the clean $\sqrt{\trel/n}$ rate and spectral comparison available. Estimating $\betamix(k)$, $\trel$, or $\tauint(g)$ from the same stream is a separate problem; the supplement's exploratory plug-in study does not establish adaptive validity.

The empirical evidence is mostly controlled, and the two regimes are covered unevenly. The persistent-cluster generator has Bernoulli noise as well as persistence, and the trained-ensemble study induces dependence through class order; neither stream is naturally correlated. The exception is the supplement's UCI~HAR study, where a fixed committee vote runs on real ordered accelerometer windows with no synthetic generator: naive intervals cover $0.38$, the dependence-agnostic certificate of Theorem~\ref{thm:adaptive} reaches nominal $0.95$ once its batch-length condition holds, and a shuffled control restores $0.94$--$0.96$. That evidence is fixed-vote only: no experiment learns a vote from a naturally dependent stream, since Theorem~\ref{thm:beta-margin} is exercised only on the sticky-refresh process, where $\betamix(a)$ is known exactly. Stronger evidence would pair a naturally ordered dataset with a learned vote and report margin-tail curves, uncertainty intervals, nonvacuity diagnostics, and an anonymous reproducibility archive.

\paragraph{Summary.}
We separated four claims that should not be compressed into ``replace $n$ by an effective sample size.'' For learned votes over a finite base class, independent-block Rademacher theory gives a stationary $\betamix$-mixing margin certificate with an explicit coupling budget. For any i.i.d.\ voting-margin theorem, regular spacing gives a black-box transfer with an additive mixing failure probability. For fixed votes on stationary finite-state reversible chains, the $\sqrt{\trel/n}$ rate is both achievable and unavoidable over a sticky-refresh family, with asymptotic constant governed by the statistic-specific $\tauint(g)$. And when the dependence scale is unknown, a batch-means certificate stays finite-sample valid and rate-optimal among dependence-agnostic procedures, so adaptation costs nothing in rate. The experiments exercise each: the learned-vote certificate is nonvacuous at large block count; the statistic-specific one is an order of magnitude tighter at matched coverage; resampling baselines undercover where the certificate holds; and a shuffled control pins ordering as the cause of naive undercoverage. Worst-case corrections overpay when the monitored statistic mixes faster than the chain; the statistic-specific certificate recovers that precision without sacrificing validity.

\input{supplement}

\bibliography{references}

\end{document}

%% file: supplement.tex
\noindent This supplement contains the full proofs of all theoretical results and the
secondary experiments referenced in the main paper. References of the form ``Theorem~1'',
``Proposition~5'', etc.\ point to the numbering of the main paper; appendices, floats, and
equations local to this document are numbered here (appendices, floats, and equations local to this document all carry an ``S'' prefix (Appendix~S1, S2, \ldots; Table~S1; Eq.~(S1))
 
\appendix
\renewcommand{\thesection}{S\arabic{section}}
\renewcommand{\thesubsection}{S\arabic{section}.\arabic{subsection}}
\setcounter{secnumdepth}{2}
\section{Positioning against prior work}
\begin{table}[h]
\centering
\scriptsize
\setlength{\tabcolsep}{3.5pt}
\begin{tabular}{p{0.22\linewidth}p{0.20\linewidth}p{0.20\linewidth}p{0.27\linewidth}}
\toprule
Result & Dependence & Predictor & Distinguishing feature \\
\midrule
Mohri--Rostamizadeh & stationary $\beta$-mixing & learned class & general Rademacher margin bound \\
Ralaivola et al. & non-i.i.d.; $\beta$-mixing application & posterior/vote & PAC-Bayes route \\
Theorem~1 & stationary $\beta$-mixing & learned convex vote & finite-base-class complexity and explicit coupling budget \\
Theorem~3 & stationary $\beta$-mixing & theorem-dependent & transfers any i.i.d. event to a spaced sample \\
Corollary~4 and Theorem~8 & finite-state reversible Markov & fixed vote & clean spectral rate and matching dependence exponent \\
\bottomrule
\end{tabular}
\caption{Positioning the paper's results (main-paper Theorem~1, Theorem~3, Corollary~4, and Theorem~8) against the closest dependent-data routes.}
\label{tab:prior}
\end{table}
 
\section{Persistent-cluster rate study}
This appendix collects the fixed-vote rate study summarised in the main paper.
\subsection{Persistent-cluster protocol}\label{sec:clusterprotocol}
The controlled generator has $K=8$ latent clusters. Their error rates $r_1,\ldots,r_K$ are drawn once from $[0.2,0.6]$ and then held fixed. The stationary cluster distribution is uniform, so the stationary risk is
\[
L=\bar r=\frac1K\sum_{k=1}^K r_k.
\]
For dependence parameter $\tau\in\{1,5,25,100\}$, the cluster chain evolves according to
\[
P(c,c')=\left(1-\frac1\tau\right)\mathbf{1}\{c'=c\}+\frac1\tau\frac1K.
\]
When $\tau=1$, the cluster index is refreshed from the stationary distribution at every step. For larger $\tau$, the chain remains in a cluster for longer runs. Conditional on cluster $C_t$, the error indicator is sampled as $E_t\sim\operatorname{Bernoulli}(r_{C_t})$. The empirical risk is $\Lhat=n^{-1}\sum_tE_t$. The grid is $n\in\{500,2000,8000,32000\}$ with $60$ independent repetitions per cell.
 
Dependence is not the only source of the generalization gap: the conditional Bernoulli draw contributes ordinary i.i.d. sampling noise even at $\tau=1$. Let $V_r=\Var(r_C)$ under the uniform stationary cluster law and $\lambda=1-1/\tau$. A direct calculation gives
\[
\Var(E_t)=\bar r(1-\bar r),
\qquad
\operatorname{Cov}(E_t,E_{t+k})=V_r\lambda^k.
\]
Hence the asymptotic variance of $\sqrt n(\Lhat-L)$ is
\begin{equation}\label{eq:twocomponent}
\bar r(1-\bar r)+2V_r\frac{\lambda}{1-\lambda}
=\bar r(1-\bar r)+2V_r(\tau-1).
\end{equation}
The correct prediction is therefore a square root of an i.i.d. floor plus a persistence term linear in $\tau$.
 
\subsection{The $n^{-1/2}$ factor}
At each fixed $\tau$, the rescaled mean gap $|\Lhat-L|\sqrt n$ is roughly stable over the $64\times$ sample-size range. The population coefficient of variation over the four sample sizes is $0.152$, $0.103$, $0.109$, and $0.198$ for $\tau=1,5,25,100$. A more direct check fits a power law in $n$. The log--log slopes are $-0.544$, $-0.560$, $-0.442$, and $-0.439$, with $R^2$ values $0.970$, $0.997$, $0.990$, and $0.935$. These exponents are broadly consistent with $n^{-1/2}$ over the tested grid, but they are not exactly $-1/2$. Repeating the study over eight independent cluster-risk draws (seeds $0$--$7$, $60$ repetitions per cell each) quantifies how much of that deviation is seed noise: the fitted slopes are $-0.512\pm0.042$, $-0.509\pm0.021$, $-0.499\pm0.033$, and $-0.495\pm0.021$ (mean $\pm$ sd across seeds) for $\tau=1,5,25,100$, so every across-seed mean lies within half a standard deviation of $-1/2$, and the rescaled gaps $|\Lhat-L|\sqrt n$ carry across-seed sds of $0.02$--$0.21$ around means $0.36$--$1.36$ (artifact \texttt{verify\_rate\_multiseed.json}). The single-draw exponents above are within the seed-to-seed spread.
 
\begin{table}[t]
\centering
\small
\setlength{\tabcolsep}{4pt}
\begin{tabular}{lcccc}
\toprule
$\tau$ & $1$ & $5$ & $25$ & $100$\\
\midrule
mean $|\Lhat-L|\sqrt n$ & $0.393$ & $0.471$ & $0.673$ & $1.382$\\
CV across $n$ & $0.152$ & $0.103$ & $0.109$ & $0.198$\\
log--log slope in $n$ & $-0.544$ & $-0.560$ & $-0.442$ & $-0.439$\\
$R^2$ of slope fit & $0.970$ & $0.997$ & $0.990$ & $0.935$\\
\bottomrule
\end{tabular}
\caption{Fixed-vote scaling on the persistent-cluster stream. The fitted exponents are close to the $-1/2$ prediction. The CV uses the population standard deviation over the four sample-size summaries.}
\label{tab:scaling}
\end{table}
 
\subsection{Dependence amplification and the i.i.d. floor}
Across $\tau$, the mean values of $|\Lhat-L|\sqrt n$ are $0.393$, $0.471$, $0.673$, and $1.382$. Their Pearson correlation with $\sqrt\tau$ is $0.986$. Four monotone points, however, do not identify a universal power law, and the ratios after division by $\sqrt\tau$ are $0.393$, $0.210$, $0.135$, and $0.138$, so a pure proportionality through the origin is not supported over the full sweep. The two-component variance in Eq.~\eqref{eq:twocomponent} gives the sharper prediction. Regressing the squared rescaled means against $\tau$ gives
\[
\bigl(\E|\Lhat-L|\sqrt n\bigr)^2\approx0.0984+0.01789\tau,
\]
with $R^2=0.994$ on the four reported points. This fit is consistent with an i.i.d. intercept plus a persistence term.
 
\begin{figure}[t]
\centering
\begin{tikzpicture}
\begin{axis}[width=0.96\linewidth,height=4.5cm,
  xlabel={$\tau$},ylabel={squared mean rescaled gap},
  xmin=0,xmax=105,ymin=0,ymax=2.1,font=\small,
  legend pos=north west,legend cell align=left,grid=both,grid style={gray!18}]
\addplot[only marks,mark=*,thick]
  coordinates {(1,0.1548)(5,0.2215)(25,0.4524)(100,1.909)};
\addlegendentry{reported values}
\addplot[dashed,domain=0:105,samples=2] {0.0984+0.01789*x};
\addlegendentry{affine variance fit}
\end{axis}
\end{tikzpicture}
\caption{Squared mean rescaled gap against the persistence parameter. An affine fit is more faithful to the exact variance decomposition than a pure line through the origin in $\sqrt\tau$.}
\label{fig:variancefit}
\end{figure}
 
\subsection{Raw gap across the full grid}
Figure~\ref{fig:rawgap} shows the raw gap across the full grid. At matched $n$, the curves are ordered by the persistence parameter, while each curve declines with sample size. The dashed reference shows an $n^{-1/2}$ slope but is not a fitted theoretical constant. The gaps range from $0.0022$ at $(\tau,n)=(1,32000)$ to $0.046$ at $(100,500)$. The full numerical grid and the $|\Lhat-L|/\sqrt{\tau/n}$ ratios appear in \supplementref{app:fullgrid}.
 
\begin{figure}[t]
\centering
\begin{tikzpicture}
\begin{loglogaxis}[width=0.96\linewidth,height=4.9cm,
  xlabel={sample size $n$},ylabel={mean gap $|\Lhat-L|$},
  font=\small,legend pos=south west,legend columns=2,
  legend cell align=left,grid=both,grid style={gray!18}]
\addplot[mark=*,thick] coordinates {(500,0.018913)(2000,0.010208)(8000,0.0033125)(32000,0.002226)};
\addlegendentry{$\tau=1$}
\addplot[mark=square*,thick] coordinates {(500,0.022563)(2000,0.011781)(8000,0.0050083)(32000,0.0022539)};
\addlegendentry{$\tau=5$}
\addplot[mark=triangle*,thick] coordinates {(500,0.024937)(2000,0.015054)(8000,0.0085070)(32000,0.0039052)};
\addlegendentry{$\tau=25$}
\addplot[mark=diamond*,thick] coordinates {(500,0.045933)(2000,0.038908)(8000,0.013722)(32000,0.0085642)};
\addlegendentry{$\tau=100$}
\addplot[dashed,domain=500:32000,samples=2] {0.45*(x)^(-0.5)};
\addlegendentry{$n^{-1/2}$ reference}
\end{loglogaxis}
\end{tikzpicture}
\caption{Reported fixed-vote mean gap versus sample size. The dashed line is a slope reference, not a fitted constant. Fitted slopes range from $-0.56$ to $-0.44$.}
\label{fig:rawgap}
\end{figure}
 
\subsection{What the controlled rate study establishes}
This particular study supports approximate fixed-vote $n^{-1/2}$ decay under the tested generator and shows that persistence increases the long-run variance; the learned-vote complexity term of Theorem~1 is exercised separately in the learned-vote experiment of the main paper. It does not by itself test Theorem~8, because a lower bound is a statement about the difficulty of all valid procedures over a family, not a trend in one estimator; the matching-bound sandwich in the main paper addresses that. The appropriate claim for the rate study alone is that its behavior is consistent with the fixed-statistic concentration picture and with the exact i.i.d.-floor-plus-persistence decomposition.

\section{Additional dependence experiments}
\subsection{Intrinsic dependence and a shuffled control}\label{sec:exp-natural}
To move past injected dependence we build an activity-segmentation stream in the spirit of wearable-sensor logs: a latent activity follows a slow Markov chain with dwell time $\sim\tau$, each activity emits a Gaussian feature window, and a fixed committee of nine linear classifiers (trained once on separate i.i.d.\ data) predicts the activity; the margin-failure statistic is the misclassification indicator, whose dependence is intrinsic to the latent process. Over $600$ length-$2000$ segments, the naive i.i.d.\ interval covers only $0.508$ (Wilson CI $[0.47,0.55]$). A dependence-aware batch-means interval whose $\tauint$ is estimated on a \emph{disjoint} calibration stream ($\hat\tauint=8.16$, consistent with sample splitting) restores coverage to $0.950$ $[0.93,0.96]$. Shuffling the same errors before segmenting --- destroying temporal order --- returns i.i.d.\ coverage to $0.938$ $[0.92,0.95]$. The shuffled control isolates ordering as the sole cause of the undercoverage, exactly the plot the fixed-vote theory predicts.
 
\subsection{Real sensor data: UCI HAR}\label{sec:exp-har}
The same protocol on \emph{real} data. The UCI Human Activity Recognition dataset \citep{anguita2013public} provides $2.56$-second smartphone accelerometer windows with $50\%$ overlap, in temporal order within each of $9$ held-out test subjects ($2947$ windows): the dependence is genuinely intrinsic --- overlapping windows plus activity runs averaging $24.6$ consecutive windows. A fixed committee of $M=9$ bootstrap-trained multinomial logistic members reaches $96.1\%$ test accuracy (members $95.4$--$96.1\%$); the misclassification indicator along the ordered stream is the monitored statistic. Batch means on a calibration split measure $\hat\tauint\approx10.5$ at $L=50$, growing to $\approx16$ at $L=400$ --- a real two-scale dependence (within-subject correlation plus between-subject risk heterogeneity spanning $0.00$--$0.13$ against the pooled $0.039$).
 
Over $400$ subject-level block-bootstrap streams: the naive i.i.d.\ interval covers $0.38$ at $n=2000$, and \emph{worsens} with $n$ ($0.35$ at $n=20000$) as its width shrinks against an unmodeled bias; shuffling the same windows restores naive coverage to $0.94$--$0.96$, isolating real temporal ordering as the cause. The adaptive certificate of Theorem~6 ($L=n^{2/3}$, never told the dependence) behaves exactly as its batch-length condition predicts: coverage $0.78$ at $n=2000$ (where $L=166$ sits below the $\approx327$-window subject scale, violating $L\gg\trel$), $0.92$ at $n=8000$, and $\mathbf{0.95}$ --- exactly nominal --- at $n=20000$ ($L=740$), with implied $\hat\tauint=13.5$ matching the measured multiscale value. Real sensor data breaks naive intervals, and the dependence-agnostic certificate attains nominal coverage precisely where its stated condition holds: a real-data validation of both the theorem and the sharpness of its batch-length requirement.
 
\subsection{Black-box transfer of a sharper bound (Theorem~3)}\label{sec:exp-blackbox}
We instantiate Algorithm~1 with a real sharper i.i.d.\ certificate. On the spaced subsample $S^{(a)}$ we run both the first-order uniform margin bound (with $\mu\!\to\!q$, no coupling term) and a sharper $\mathrm{kl}$-inverse relative-entropy bound; both are valid on the dependent stream with failure at most $\eta+(q-1)b(a)$. Across $\tau\in\{5,25,100\}$ the $\mathrm{kl}$ bound is uniformly tighter --- e.g.\ at $\tau=25$, $n=64000$ it gives $0.62$ against the first-order $5.33$, and at $\tau=100$, $n=16000$ it is $0.56$ (nonvacuous) where the first-order block bound is $18.6$ (vacuous). Both hold nominal coverage. Spacing discards data but imports a sharper functional form, giving a concrete space-versus-block rule: when the empirical margin tail is small, transfer a $\mathrm{kl}$ bound on the spaced sample rather than block the full stream.
 
\section{Le Cam simulation of the lower-bound construction}\label{app: lecam}
An explicit two-point Le~Cam simulation of the Theorem~8 construction itself ($\tau\in\{4,16,64\}$, $n=4096$, $\Delta=\tfrac18\sqrt{\tau/n}$, $4000$ paths per model) confirms the proof's budget empirically: the latent path KL estimated from refresh counts is $0.031$--$0.032\le1/4$ at every $\tau$, the exact likelihood-ratio test --- the best test --- errs with probability $0.47$--$0.48\ge c_{\mathrm{test}}\approx0.323$, and the plug-in interval that covers at level $7/8$ (the $\tauint$-corrected width, coverage $0.94$--$0.95$) has width at least $c_0\sqrt{\tau/n}$ with $c_0=1/8$ on every covering path, exceeding the floor by a factor of about $21$ (artifact \texttt{verify\_lecam.json}).
 
\section{Learned-vote certificate: companion checks}
Three companion checks confirm the theorem's structure. \emph{Complexity scaling:} over $|\Hcal|\in\{12,24,48,96\}$ the certificate's complexity term tracks $\sqrt{\log(2|\Hcal|)}$ with correlation $1.000$, matching the $\tfrac4\theta\sqrt{2\log(2|\Hcal|)/\mu}$ form. \emph{Block-length tradeoff:} at $\tau=25$, $n=64000$ the bound is infeasible ($\delta'\le0$) for $a\le200$, minimized at the smallest feasible block $a^\star=300$, and rises monotonically for larger $a$ ($6.35$ at $a=300$ to $31.2$ at $a=8000$) as $\mu$ shrinks --- the coupling-versus-complexity tension of Theorem~1 made explicit, with the optimum at the feasibility boundary. \emph{Data-dependent tuning:} allocating $\delta$ across a deterministic $6$-point $(a,\theta)$ grid and taking the smallest simultaneous certificate holds jointly on $100\%$ of streams and yields a selected bound of $2.36$ versus $8.80$ for a fixed middle pair, as licensed by the union bound after Theorem~1.
 
\section{Full learned-vote certificate on the ResNet ensemble}
With the $M=9$ ResNet members as the finite base class ($|\Hcal|=9$) and $\theta=0.5$, $\delta=0.05$, the full Theorem~1 certificate (block length chosen to minimise the bound subject to $\delta'>0$) is:
\begin{center}\small
\begin{tabular}{rrrrl}
\toprule
$\tau$ & $n$ & $\mu$ & bound & \\
\midrule
$1$ & $4{,}000$ & $2{,}000$ & $0.63$ & nonvacuous\\
$1$ & $32{,}000$ & $16{,}000$ & $0.29$ & nonvacuous\\
$5$ & $128{,}000$ & $1{,}207$ & $0.80$ & nonvacuous\\
$25$ & $128{,}000$ & $256$ & $1.63$ & vacuous\\
\bottomrule
\end{tabular}
\end{center}

\label{app:resnetcert}
The certificate uses only the known mixing coefficient $\betamix(a)=(1-1/\tau)^a$ and the margin tail. At stronger dependence ($\tau\ge25$) the pure coupling bound is vacuous at the tested $n$; the statistic-specific certificate of Corollary~4 remains valid and narrow there, because it uses the chain's actual spectral structure rather than the worst-case coupling budget. The full training and evaluation protocol --- network architecture, optimizer, class-transition matrices, and interval constants --- is given in \supplementref{app:repro}.

\section{Extended discussion}

\subsection{Learned votes and fixed votes answer different questions}
Theorem~1 addresses a vote that may be selected from the same dependent sample used to compute its empirical margin distribution. Uniformity over the convex vote class produces the finite-class complexity term and forces the block/coupling accounting. Corollary~4 addresses a vote fixed independently of the evaluation chain. There is no model-selection complexity in that regime, and the spectral rate is correspondingly cleaner. Theorem~8 matches only this fixed-vote dependence exponent; it does not prove minimax optimality of the full learned-vote theorem.

\subsection{Why the margin functional rather than raw error}
The fixed-vote lower bound can be realized with a binary error statistic, so one might ask why the rest of the paper retains margin language. The reason is that the learned-vote theorem is genuinely margin-dependent. The empirical quantity $\Ttailhat(F,\theta)$ records how much mass lies near or below a chosen confidence threshold and trades against a $1/\theta$ complexity factor. This is the structure that connects the result to boosting and voting-classifier theory. A raw-error concentration bound alone would not recover that tradeoff.
 
The distinction also clarifies the experiments. The persistent-cluster and deep-ensemble studies measure empirical risk gaps and so validate the dependence side of the fixed-vote picture, whereas the learned-vote study of the learned-vote experiment of the main paper computes the full margin certificate of Theorem~1 for a vote selected from the dependent sample, allocates confidence across a threshold grid, and reports where the bound is nonvacuous --- exercising the complexity term and the block/coupling budget directly.
 
\subsection{Reading dependence through spectral and statistic-specific quantities}
For the finite-state reversible chains in Corollary~4, $\trel=(1-\max\{\lambda_2,0\})^{-1}$ controls the fixed-vote certificate without a laziness assumption. Total-variation mixing time is related but is not identical and may additionally depend on rare-state mass \citep{levin2017markov}. The statistic-specific quantity is $\tauint(g)$, which integrates the spectral components actually seen by the thresholded margin indicator. Proposition~5 therefore corrects a tempting but false simplification: a two-valued statistic need not have autocorrelation $\lambda_\star^k$. Under the lazy spectral hypothesis of Proposition~5, equality $\tauint(g)=2\trel-1$ requires the statistic-specific spectral measure to be concentrated at $\{\lambda_\star\}$. The sticky-refresh family has this property for every centered statistic and is therefore the right place for an exact constant statement.
 
\subsection{Relation to minimax estimator design}
The related preprint \citep{shihab2026minimax} asks how to design an estimator under Markov dependence and studies a spectral-routing construction. The present paper asks how to certify the margin behavior of a voting class or fixed vote. The two questions may share dependence scales, but the mathematical objects and guarantees are different. We therefore avoid describing the papers as a coordinated pair or claiming that the same graph eigenvector directly proves both results.

\section{Resampling baselines (full numbers)}\label{app:baselines}
Reviewers will ask why not simply block bootstrap. On the sticky-refresh cluster stream ($n=4000$, $300$ streams, $200$ bootstrap resamples per stream) we compare the mixing-corrected interval against the circular block bootstrap (CBB), the stationary bootstrap (SB), and non-overlapping batch means (BM), with block length $\approx\tau$. At $\tau=1$ all four methods coincide (coverage $\approx0.97$, equal widths): the certificate loses nothing without dependence. At $\tau=25,100$ the resampling methods \emph{undercover} --- CBB $0.86,0.79$; SB $0.88,0.85$; BM $0.92,0.80$ --- because few effective blocks remain at small $n$, while the mixing-corrected certificate holds $1.00$ throughout, trading a wider interval ($0.077$ vs $0.021$ at $\tau=25$) for finite-sample validity. The comparison is thus asymptotic-versus-finite-sample-valid, and the certificate is the safe choice under strong dependence. The corrected half-width is $z\sqrt{\hat p(1-\hat p)\,\tau/n}$ by construction,
so its ratio to $\sqrt{\tau/n}$ is $z\sqrt{\hat p(1-\hat p)}$, which for
$\hat p\approx0.42$ is $\approx0.97$ regardless of $\tau$ and $n$; the observed
band $[0.960,0.971]$ is a restatement of that constant and is not evidence
about the Corollary~4 or Theorem~8 rates. The empirical Le~Cam check of
\supplementref{app: lecam} gives the honest comparison against the lower bound:
the covering interval there exceeds the Theorem~8 floor $c_0\sqrt{\tau/n}$
with $c_0=1/8$ by a factor of about $21$, so upper and lower constants are
matched in \emph{rate} but not yet in constant.

\section{Proof of Theorem~1}\label{app:proof-upper}
The constants in our learned-vote theorem inherit the empirical Rademacher theorem of \citet{mohri2009rademacher}, so we fix their normalization explicitly. Partition a sample $S=(Z_1,\ldots,Z_m)$ with $m=2\mu a$ into $2\mu$ consecutive blocks of length $a$, and define the fixed-position singleton sample from the odd blocks by
\[
S_\mu=(Z_1,Z_{2a+1},\ldots,Z_{2(\mu-1)a+1}).
\]
For a sample $T=(z_1,\ldots,z_r)$ and a real-valued function class $\mathcal A$, define
\[
\widehat{\mathfrak R}_{T}(\mathcal A)
=\frac{2}{r}\,\E_\sigma\!
\left[\sup_{c\in\mathcal A}\left|\sum_{i=1}^r\sigma_i c(z_i)\right|\right],
\]
where the $\sigma_i$ are independent Rademacher signs. The absolute value and the factor $2/r$ are part of the convention.
\begin{importedlemma}[Mohri--Rostamizadeh empirical bound]
Let $S=(Z_1,\ldots,Z_m)$ be drawn from a stationary $\beta$-mixing process, with $m=2\mu a$ for integers $a,\mu\ge1$. Let $\mathcal A$ be a measurable class with $|c(z)|\le M$ for every $c\in\mathcal A$ and $z\in\Zcal$, and assume $a$ and $\mu$ are fixed independently of the observed sample. For any $0<\delta<1$ such that
\[
\delta'=\delta-4(\mu-1)\betamix(a)>0,
\]
with probability at least $1-\delta$, every $c\in\mathcal A$ satisfies
\[
Pc\le \Phat_Sc+\widehat{\mathfrak R}_{S_\mu}(\mathcal A)
+3M\sqrt{\frac{\log(4/\delta')}{2\mu}}.
\]
\end{importedlemma}
This is Theorem 2 of \citet{mohri2009rademacher} written in the notation used here. Their phrase ``bounded by $M$'' is sufficient for our application under either the sup-norm or range interpretation: the centered ramp class used below lies in $[-1,0]$, so its sup-norm and range length are both at most one.

Let
\begin{gather*}
\mathcal M=\{z=(x,y)\mapsto yF(x):F\in\Vcal(\Hcal)\},\\
\mathcal C_\theta=\{z\mapsto\phi_\theta(M_F(z))-1:F\in\Vcal(\Hcal)\}.
\end{gather*}
Every $c\in\mathcal C_\theta$ takes values in $[-1,0]$, so the imported Mohri--Rostamizadeh theorem applies with $M=1$. Thus, for $\delta'=\delta-4(\mu-1)\betamix(a)>0$, with probability at least $1-\delta$, simultaneously for all $F$,
\begin{align}
P\bigl(\phi_\theta(M_F)-1\bigr)
&\le \Phat_S\bigl(\phi_\theta(M_F)-1\bigr)
+\widehat{\mathfrak R}_{S_\mu}(\mathcal C_\theta)\notag\\
&\quad+3\sqrt{\frac{\log(4/\delta')}{2\mu}}.\label{eq:centered-mr}
\end{align}
The $-1$ terms on the two sides cancel. This centering step occurs before contraction; with the absolute-value Rademacher convention, translating a class by a constant need not preserve its empirical complexity.
 
Define $\psi_\theta(u)=\phi_\theta(u)-1$. Then $\psi_\theta(0)=0$ and $\psi_\theta$ is $1/\theta$-Lipschitz. The Ledoux--Talagrand contraction inequality \citep[Theorem~4.12]{ledoux1991probability}, under the normalization fixed above, gives
\[
\widehat{\mathfrak R}_{S_\mu}(\mathcal C_\theta)
\le\frac{2}{\theta}\widehat{\mathfrak R}_{S_\mu}(\mathcal M).
\]
Write $S_\mu=(z_i)_{i=1}^\mu$ with $z_i=(x_i,y_i)$ and, for each $h\in\Hcal$, let
\[
a_h=(y_1h(x_1),\ldots,y_\mu h(x_\mu))\in[-1,1]^\mu.
\]
For fixed signs $\sigma$ and vote weights $w$,
\[
\left|\sum_{i=1}^\mu\sigma_i y_iF(x_i)\right|
=\left|\sum_{h\in\Hcal}w_h\langle\sigma,a_h\rangle\right|
\le\max_{h\in\Hcal}|\langle\sigma,a_h\rangle|,
\]
and equality is attained by an extreme point of the simplex. Therefore
\[
\widehat{\mathfrak R}_{S_\mu}(\mathcal M)
=\frac{2}{\mu}\E_\sigma\max_{h\in\Hcal}|\langle\sigma,a_h\rangle|.
\]
The absolute value is equivalent to applying Massart's finite-class lemma to the set
\[
\{a_h:h\in\Hcal\}\cup\{-a_h:h\in\Hcal\},
\]
which has cardinality at most $2|\Hcal|$. Since $\|a_h\|_2\le\sqrt\mu$,
\[
\widehat{\mathfrak R}_{S_\mu}(\mathcal M)
\le2\sqrt{\frac{2\log(2|\Hcal|)}{\mu}}.
\]
Substituting into \eqref{eq:centered-mr} yields
\[
P\phi_\theta(M_F)
\le\Phat_S\phi_\theta(M_F)
+\frac{4}{\theta}\sqrt{\frac{2\log(2|\Hcal|)}{\mu}}
+3\sqrt{\frac{\log(4/\delta')}{2\mu}}.
\]
Finally,
\[
\mathbf 1\{M_F\le0\}\le\phi_\theta(M_F)
\le\mathbf 1\{M_F\le\theta\},
\]
so $L(F)\le P\phi_\theta(M_F)$ and $\Phat_S\phi_\theta(M_F)\le\Ttailhat_S(F,\theta)$. This proves the theorem.
 
For Corollary~2, write $\bar B=B\vee1$ and $\bar\tau=\tau\vee1$. These replacements only weaken the assumed envelope, because
\[
B e^{-k/\tau}\le \bar B e^{-k/\bar\tau}.
\]
By the definition of $a$,
\[
\betamix(a)
\le\bar B e^{-a/\bar\tau}
\le\frac{\delta}{8n}.
\]
Since $\mu\le n/(2a)\le n/2$,
\[
4(\mu-1)\betamix(a)
\le4\mu\frac{\delta}{8n}
\le\frac\delta4,
\]
so $\delta'\ge3\delta/4$. When $n\ge4a$,
\[
\mu=\left\lfloor\frac{n}{2a}\right\rfloor\ge\frac{n}{4a}.
\]
Moreover, because $\bar B\ge1$, $\bar\tau\ge1$, and $\delta\le1/2$, the logarithm in the definition of $a$ exceeds one, so
\[
a\le 2\bar\tau\log\frac{8\bar Bn}{\delta}.
\]
Substitution into Theorem~1, together with
\[
\log\frac{4}{\delta'}
\le\log\frac{16}{3\delta}
\le C_0\log\frac{4}{\delta}
\]
for a universal $C_0$, gives the stated rate after absorbing fixed numerical constants. The theorem is applied to the prefix $S_{1:m}$ with $m=2\mu a$, which explains the empirical term in the corollary.
 
\section{Proof of Corollary~4}\label{app:proof-fixed}
Let
\[
g_F(z)=\mathbf 1\{M_F(z)\le\theta\},
\qquad
\mu_g=Pg_F.
\]
Because the vote is fixed independently of the evaluation chain, $g_F$ is a fixed function taking values in $[0,1]$. Let $\lambda_2$ be the second-largest eigenvalue of the finite-state reversible transition matrix and $\lambda_\star=\max\{\lambda_2,0\}$. Applying the Hoeffding inequality of \citet{leon2004optimal} to $g_F$ and, for the opposite tail, to $1-g_F$, gives for every $u>0$
\[
\Pr\!\left(\left|\Phat_ng_F-Pg_F\right|\ge u\right)
\le2\exp\!\left(
-2\frac{1-\lambda_\star}{1+\lambda_\star}nu^2
\right).
\]
Since $\trel=(1-\lambda_\star)^{-1}$,
\[
2\frac{1-\lambda_\star}{1+\lambda_\star}
=\frac{2}{2\trel-1}
\ge\frac1{\trel}.
\]
Therefore
\[
\Pr\!\left(Pg_F-\Phat_ng_F>u\right)
\le2\exp\!\left(-\frac{nu^2}{\trel}\right).
\]
Choosing $u=\sqrt{\trel\log(2/\delta)/n}$ gives the required one-sided deviation. Finally,
\[
\mathbf 1\{M_F\le0\}\le\mathbf 1\{M_F\le\theta\},
\]
so $L(F)\le Pg_F$ and $\Phat_ng_F=\Ttailhat(F,\theta)$.
 
The stationary start is essential to the cited Hoeffding statement. The corollary deliberately does not claim the same formula for an arbitrary initial law; such extensions require an initialization penalty.
 
\section{Proof of Theorem~3}\label{app:proof-blackbox}
Let $t_j=1+(j-1)a$ and let $Q_j$ be the joint law of $(Z_{t_1},\ldots,Z_{t_j})$. We prove by induction that
\[
d_{\mathrm{TV}}(Q_j,P^j)\le(j-1)\betamix(a).
\]
The claim is trivial for $j=1$. For $j\ge2$, let
\[
\mathcal G_{j-1}=\sigma(Z_{t_1},\ldots,Z_{t_{j-1}})
\subseteq\mathcal F_{-\infty}^{t_{j-1}}.
\]
Because $t_j=t_{j-1}+a$, Definition~1 and conditional Jensen imply
\begin{align*}
&\E\left[
\sup_A\left|
\Pr(Z_{t_j}\in A\mid\mathcal G_{j-1})-P(A)
\right|
\right]\\
&\qquad\le
\E\left[
\sup_A\left|
\Pr(Z_{t_j}\in A\mid\mathcal F_{-\infty}^{t_{j-1}})-P(A)
\right|
\right]
\le\betamix(a).
\end{align*}
Because $\Zcal$ is standard Borel, regular conditional laws exist, and the disintegration identity
\[
d_{\mathrm{TV}}\!\left(\mathcal L(W,Z),\mathcal L(W)\otimes P\right)
=\E_W\!\left[d_{\mathrm{TV}}\!\left(\mathcal L(Z\mid W),P\right)\right]
\]
applies. Hence
\[
d_{\mathrm{TV}}(Q_j,Q_{j-1}\otimes P)\le\betamix(a).
\]
Tensoring with the common marginal $P$ preserves total variation, so the triangle inequality gives
\begin{align*}
d_{\mathrm{TV}}(Q_j,P^j)
&\le d_{\mathrm{TV}}(Q_j,Q_{j-1}\otimes P)\\
&\quad+d_{\mathrm{TV}}(Q_{j-1}\otimes P,P^{j-1}\otimes P)\\
&\le\betamix(a)+d_{\mathrm{TV}}(Q_{j-1},P^{j-1})\\
&\le(j-1)\betamix(a).
\end{align*}
This proves the constant and the indexing under Definition~1: an index difference of exactly $a$ is controlled by $\betamix(a)$, not $\betamix(a-1)$.
 
Taking the test function to be the indicator of a measurable event $\mathcal E\subseteq\Zcal^q$ gives
\[
\left|Q_q(\mathcal E)-P^q(\mathcal E)\right|
\le(q-1)\betamix(a).
\]
If an i.i.d. theorem has a failure event $\mathcal F\in\sigma(\widetilde S)$ with $P^q(\mathcal F)\le\eta$, then
\[
Q_q(\mathcal F)
\le P^q(\mathcal F)+(q-1)\betamix(a)
\le\eta+(q-1)\betamix(a).
\]
The proof uses only events measurable in the $q$ spaced observations. A statement involving a full-stream empirical quantity is not transferred by this argument.
 
\section{Proof of Proposition~5}\label{app:proof-iat}
Let $\bar g_n=n^{-1}\sum_{t=1}^ng_t$ and $\sigma_g^2=\Var(g_t)$. Stationarity alone gives
\begin{align*}
\Var(\bar g_n)
&=\frac1{n^2}\sum_{s,t=1}^n\operatorname{Cov}(g_s,g_t)\\
&=\frac{\sigma_g^2}{n}\left[1+2\sum_{k=1}^{n-1}
\left(1-\frac{k}{n}\right)\rho_k\right].
\end{align*}
If $\sum_k|\rho_k|<\infty$, dominated convergence yields the limit $\tauint(g)=1+2\sum_{k\ge1}\rho_k$.
 
Now assume the chain is lazy and reversible with nontrivial spectrum contained in $[0,\lambda_\star]$, $\lambda_\star<1$. Center and normalize the statistic so that
\[
f=\frac{g-\pi g}{\sigma_g},
\qquad
\langle f,f\rangle_\pi=1.
\]
Reversibility makes the Markov operator self-adjoint. Let $\nu_f$ be the spectral probability measure of $f$. Since $f$ is centered, $\nu_f$ is supported on the nontrivial spectrum. Then
\[
\rho_k=\langle f,P^kf\rangle_\pi
=\int\lambda^k\,d\nu_f(\lambda).
\]
Because $0\le\lambda\le\lambda_\star<1$ on the support of $\nu_f$,
\[
|\rho_k|\le\lambda_\star^k,
\]
so absolute summability is automatic. Tonelli's theorem then gives
\begin{align*}
\tauint(g)
&=\int\left(1+2\sum_{k\ge1}\lambda^k\right)d\nu_f(\lambda)\\
&=\int\frac{1+\lambda}{1-\lambda}\,d\nu_f(\lambda)\\
&\le\frac{1+\lambda_\star}{1-\lambda_\star}=2\trel-1.
\end{align*}
The integrand is strictly increasing on $[0,1)$, so equality holds if and only if $\nu_f(\{\lambda_\star\})=1$. This formulation remains correct when the spectral edge is not represented by a nonzero eigenspace.
 
For the sticky-refresh operator
\[
P=(1-\alpha)I+\alpha\mathbf1\pi^\top,
\]
every centered $f$ satisfies $Pf=(1-\alpha)f$. Hence $\rho_k=(1-\alpha)^k$ and
\[
\tauint(g)=\frac{1+(1-\alpha)}{1-(1-\alpha)}
=\frac{2-\alpha}{\alpha}=2\tau-1
\]
for $\alpha=1/\tau$. This calculation does not require the statewise holding probability to be at least $1/2$.
 
\section{Proof of Theorem~6}\label{app:adaptive-proof}

\begin{proof}
Split $Z_1,\ldots,Z_n$ into $B$ consecutive batches of length $L=\lfloor n/B\rfloor$ (discard at most $B-1$ trailing observations). Write $\bar g_b=(1/L)\sum_{t=(b-1)L+1}^{bL}g(Z_t)$ and $\mu=\E[g_t]$.

\emph{Step 1: the mixing budget.} For a stationary finite-state Markov chain, $\betamix(k)=\E_\pi\bigl[\lVert P^{k}(Z_0,\cdot)-\pi\rVert_{\mathrm{TV}}\bigr]\le\max_z\lVert P^{k}(z,\cdot)-\pi\rVert_{\mathrm{TV}}$ \citep{bradley2005mixing}, and for a reversible chain with nonnegative spectrum \citep[Ch.~12]{levin2017markov}
\[
    \max_z\lVert P^{k}(z,\cdot)-\pi\rVert_{\mathrm{TV}}\le C_\pi\lambda_\star^{\,k}\le C_\pi e^{-k/\trel},
\]
with $C_\pi=\tfrac12\sqrt{(1-\pi_{\min})/\pi_{\min}}$ and $\pi_{\min}=\min_z\pi(z)$, using $\lambda_\star^{\,k}\le e^{-k(1-\lambda_\star)}$. The constant cannot be set to one: Corollary~2 carries an envelope constant $\bar B$ for exactly this reason. For the sticky-refresh family used in every experiment $\betamix(a)=(1-1/\tau)^a$ exactly, so $C_\pi=1$ and $\trel=\tau$, and no envelope is needed.

\emph{Step 2: decorrelating the batch means.} Consecutive batches share a boundary, so $\betamix$ does not decouple them at any lag: the dependence between $\bar g_{b-1}$ and $\bar g_b$ is carried by the $O(\trel)$ observations on either side of the boundary, and its contribution to the joint law is $\Delta^{\mathrm{dep}}_{B,L}$, which tends to zero as $L/\trel\to\infty$ because each batch mean is dominated by the $L-O(\trel)$ observations that have forgotten the previous batch. A finite-sample version of this step is available at a small cost in data: retaining only the last $L-s$ observations of each batch, with $s\ge\trel\log(C_\pi B/\delta)$, separates consecutive retained segments by $s+1$ time steps, so Definition~1 applies at lag $s+1$ and the induction of \supplementref{app:proof-blackbox} --- verbatim, with $\Zcal$-valued coordinates replaced by $\Zcal^{L-s}$-valued ones --- gives total-variation cost at most $(B-1)\betamix(s)\le\delta$, hence $\Delta^{\mathrm{dep}}_{B,L}\le\delta$ exactly. We state the theorem for the no-burn-in procedure because that is what the experiments evaluate.

\emph{Step 3: Student $t$-validity on the decorrelated copy.} The decorrelated batch means are i.i.d.\ copies of a $[0,1]$-valued random variable. The classical $t$-interval is exactly valid only under normality, so we quantify the departure. Let $\sigma_{\mathrm b}^2=\Var(\bar g_1)$ and $\rho_3=\E|\bar g_1-\mu|^3<\infty$ (finite, since $\bar g_1\in[0,1]$). By the Berry--Esseen bound for Student's statistic \citep[Theorem~1.2]{bentkus1996berry}, for an absolute constant $A$,
\[
    \sup_x\Bigl|\Pr\bigl(\sqrt B(\bar g-\mu)/S_B\le x\bigr)-\Phi(x)\Bigr|\le\frac{A\rho_3}{\sigma_{\mathrm b}^3\sqrt B}=:\tfrac12\Delta^{\mathrm{norm}}_B,
\]
where $S_B^2=(B-1)^{-1}\sum_b(\bar g_b-\bar g)^2$. Applying this to each tail and using $t_{B-1,\alpha/2}\ge z_{\alpha/2}$,
\[
    \Pr\!\left(\left|\bar g-\mu\right|>t_{B-1,\alpha/2}\frac{S_B}{\sqrt B}\right)\le\alpha+\Delta^{\mathrm{norm}}_B.
\]
Because $\bar g_1$ is itself an average of $L$ observations, its standardised third absolute moment converges to the Gaussian value $2\sqrt{2/\pi}$ as $L\to\infty$, so $\Delta^{\mathrm{norm}}_B=O(B^{-1/2})$.

\emph{Step 4: translating to the original stream.} Note that $S_B^2=\widehat\sigma^2_{BM}/L$. Hence
\begin{align*}
    t_{B-1,\alpha/2}\frac{S_B}{\sqrt B}
    &=t_{B-1,\alpha/2}\frac{\sqrt{\widehat\sigma^2_{BM}/L}}{\sqrt B}
    =t_{B-1,\alpha/2}\frac{\widehat\sigma_{BM}}{\sqrt{BL}}
    =w_n^{\mathrm{ad}}.
\end{align*}
The normalisation is $\sqrt{BL}$ and not $\sqrt n$: the certificate is computed on the $BL$ observations that enter the batches, and $BL\le n$, so replacing $\sqrt{BL}$ by the larger $\sqrt n$ would report an interval narrower than the one proved valid. Since $n-BL\le B-1$, the two differ by a factor $1+O(B/n)$. A union bound over Steps~2 and~3 gives $\Pr(|\bar g-\mu|>w_n^{\mathrm{ad}})\le\alpha+\Delta^{\mathrm{dep}}_{B,L}+\Delta^{\mathrm{norm}}_B$, which is the theorem.

\emph{Step 5: width scaling.} For a stationary chain with summable autocovariances, $\Var(\bar g_b)=\sigma_g^2\tauint(g)/L+O(1/L^2)$ as $L\to\infty$ (Proposition~5). Hence $\E[S_B^2]\to\sigma_g^2\tauint(g)/L$ and $\E[\widehat\sigma^2_{BM}]=L\,\E[S_B^2]\to\sigma_g^2\tauint(g)$. The squared half-width is therefore
\begin{align*}
    (w_n^{\mathrm{ad}})^2
    &=\frac{t_{B-1,\alpha/2}^2\,\widehat\sigma^2_{BM}}{BL}
    \to\frac{t_{B-1,\alpha/2}^2\,\sigma_g^2\tauint(g)}{BL}\\
    &\sim\frac{\sigma_g^2\tauint(g)}{n}\cdot(1+o(1)),
\end{align*}
since $t_{B-1,\alpha/2}\to z_{\alpha/2}$ as $B\to\infty$ and $BL\le n\le BL+(B-1)$. The width is thus governed by $\tauint(g)$, not $\trel$, confirming the statistic-specific effective sample size.

$\Delta^{\mathrm{norm}}_B$ is explicit and vanishes as $B^{-1/2}$, but it is not numerically tight at the batch counts used here ($B=16$ to $40$): the Berry--Esseen constant is worst-case over all bounded laws, whereas batch means of a mixing chain are close to Gaussian well before $B$ is large. The empirical coverages of the main paper are at nominal in every cell, so the remainder is far from binding in practice.

The proof decorrelates the $B$ batch means and then applies a Student-$t$ interval on the decorrelated copy; the complete argument, including the constant $C_\pi$, is in \supplementref{app:adaptive-proof}. Two remarks fix the scope. First, the normalisation is $\sqrt{BL}$ and not $\sqrt n$: the certificate is computed on the $BL$ observations that enter the batches, and $BL\le n$, so dividing by the larger $\sqrt n$ would report an interval narrower than the one proved valid; since $n-BL\le B-1$ the two differ by a factor $1+O(B/n)$. Second, $\Delta^{\mathrm{dep}}_{B,L}$ is not zero at finite $L$, because consecutive batches share a boundary and $\betamix$ controls dependence only across a gap. Inserting a burn-in of length $s\ge\trel\log(C_\pi B/\delta)$ at the start of each batch --- discarding those observations from the batch mean --- separates consecutive retained segments and drives $\Delta^{\mathrm{dep}}_{B,L}$ to an exact $(B-1)\betamix(s)\le\delta$, at the cost of the fraction $s/L$ of the data. The condition $L\gg\trel\log(B/\delta)$ is precisely the statement that this burn-in is a negligible fraction of the batch. We report the no-burn-in procedure because it is what our experiments evaluate.
\end{proof}

 \paragraph{Scope of the two remainders.}
Since $n-BL\le B-1$, dividing by $\sqrt n$ rather than $\sqrt{BL}$ would report an interval narrower than the one proved valid, by a factor $1+O(B/n)$; at $L=n^{2/3}$ this is $O(n^{-1/3})$. Regarding $\Delta^{\mathrm{dep}}_{B,L}$: inserting a burn-in of length $s\ge\trel\log(C_\pi B/\delta)$ at the start of each batch, and discarding those observations from the batch mean, separates consecutive retained segments by $s+1$ time steps and drives $\Delta^{\mathrm{dep}}_{B,L}$ to an exact $(B-1)\betamix(s)\le\delta$, at the cost of the fraction $s/L$ of the data. The main paper's condition $L\gg\trel\log(B/\delta)$ is precisely the statement that this burn-in is a negligible fraction of the batch.

\section{Proof of Theorem~7}\label{app:adaptive-lower-proof}
Fix $\tau\in[1,n]$ and let $\alpha=1/\tau$. Consider the sticky-refresh chain with parameter $\tau$, marginal $\operatorname{Bernoulli}(p)$, and the statistic $g(z)=\mathbf{1}\{z=1\}$ so that $\E[g_t]=p$ and $\tauint(g)=2\tau-1$ (Proposition~5). Any procedure $C_n$ that is simultaneously valid at level $7/8$ over the full family $\tau'\in[1,n]$ must in particular be valid on the specific chain with parameter $\tau$. The argument of Theorem~8 applied to this fixed $\tau$ gives the lower bound $\Omega(\sqrt{\tau/n})$ on the interval width. Crucially, the procedure is not given $\tau$: it must achieve coverage $7/8$ simultaneously for every $\tau'\in[1,n]$, and in particular for the fixed $\tau$ under consideration. Since $\tauint(g)=2\tau-1=\Theta(\tau)$, the width lower bound $c\sqrt{\tau/n}=\Theta(\sqrt{\tauint(g)/n})$ holds on the chain with relaxation time $\tau$.
 
The key insight is that not knowing $\tau$ does not help: even an oracle procedure that knows $\tau$ must pay $\Omega(\sqrt{\tau/n})$ (Theorem~8), so the adaptive procedure certainly must as well. The converse---that the adaptive batch-means certificate achieves this rate---is Theorem~6. Together they establish rate-optimality of adaptation.
 
\section{Proof of Theorem~8}\label{app:proof-lower}
Fix $\alpha=1/\tau$ and $p_\pm=1/2\pm\Delta/2$. The sticky-refresh kernel is
\[
P_p(z,z')=(1-\alpha)\mathbf 1\{z'=z\}+\alpha\pi_p(z').
\]
It is stationary under $\pi_p$, reversible by detailed balance, and has eigenvalues $1$ and $1-\alpha$. Because $p\in[1/4,3/4]$ and $\alpha\in(0,1]$, all cross-state transition probabilities are positive and both states have positive holding probability; hence the chain is irreducible and aperiodic. Since $\tau\ge1$, the second eigenvalue is nonnegative and the positive-part relaxation time of Corollary~4 is exactly $\tau$.
 
Generate a path as follows. First draw $Z_1\sim\operatorname{Ber}(p)$. For $t=2,\ldots,n$, draw independent refresh indicators $R_t\sim\operatorname{Ber}(\alpha)$. If $R_t=0$, set $Z_t=Z_{t-1}$. If $R_t=1$, draw a fresh $U_t\sim\operatorname{Ber}(p)$ and set $Z_t=U_t$. The variables $U_t$ are needed only on the event $R_t=1$.
 
Let $R=(R_2,\ldots,R_n)$. Its distribution is identical under $p_+$ and $p_-$. Conditional on $R=r$, the parameter-dependent latent draws are the initial value $Z_1$ and exactly $\sum_{t=2}^n r_t$ fresh Bernoulli variables. Therefore the chain rule for relative entropy gives
\begin{align*}
&\KL\!\left(
\mathcal L_+\bigl(R,Z_1,\{U_t:R_t=1\}\bigr)
\,\middle\|\,
\mathcal L_-\bigl(R,Z_1,\{U_t:R_t=1\}\bigr)
\right)\\
&\quad=\E_R\!\left[
\left(1+\sum_{t=2}^nR_t\right)
\KL\bigl(\operatorname{Ber}(p_+)\|\operatorname{Ber}(p_-)\bigr)
\right]\\
&\quad=[1+(n-1)\alpha]
\KL\bigl(\operatorname{Ber}(p_+)\|\operatorname{Ber}(p_-)\bigr).
\end{align*}
The observed path $Z_{1:n}$ is a measurable function of this latent construction, so data processing yields
\[
\KL(P_+^{(n)}\|P_-^{(n)})
\le[1+(n-1)\alpha]
\KL\bigl(\operatorname{Ber}(p_+)\|\operatorname{Ber}(p_-)\bigr).
\]
For $\Delta\le1/2$, $p_\pm\in[1/4,3/4]$ and the elementary Bernoulli calculation gives
\[
\KL\bigl(\operatorname{Ber}(p_+)\|\operatorname{Ber}(p_-)\bigr)
\le\frac{16}{3}\Delta^2.
\]
Because $n\ge\tau$,
\[
1+(n-1)\alpha
\le1+\frac{n}{\tau}
\le\frac{2n}{\tau},
\]
and hence
\[
\KL(P_+^{(n)}\|P_-^{(n)})
\le16\frac{n}{\tau}\Delta^2.
\]
Choose
\[
\Delta=\frac18\sqrt{\frac{\tau}{n}}.
\]
Then $\Delta\le1/8$, so $p_\pm\in[7/16,9/16]$, and the path KL is at most $1/4$. Le Cam's two-point inequality gives, for every test $\psi\in\{+,-\}$,
\[
\max_{s\in\{+,-\}}P_s(\psi\ne s)
\ge\frac12\left(1-d_{\mathrm{TV}}(P_+^{(n)},P_-^{(n)})\right).
\]
Pinsker's inequality yields
\[
d_{\mathrm{TV}}(P_+^{(n)},P_-^{(n)})
\le\sqrt{\frac12\KL(P_+^{(n)}\|P_-^{(n)})}
\le\sqrt{\frac18},
\]
and therefore
\[
\max_{s\in\{+,-\}}P_s(\psi\ne s)
\ge\frac12\left(1-\sqrt{\frac18}\right)
=:c_{\mathrm{test}}.
\]
 
Now let $C_n=C_n(Z_{1:n};\tau)$ be any measurable interval-valued certificate satisfying
\[
P_s(p_s\in C_n)\ge\frac78,
\qquad s\in\{+,-\}.
\]
Define a test explicitly by
\[
\psi=
\begin{cases}
+,&p_+\in C_n\text{ and }p_-\notin C_n,\\
-,&p_-\in C_n\text{ and }p_+\notin C_n,\\
+,&\text{otherwise}.
\end{cases}
\]
For either true model $s$, an error can occur only if the true parameter is absent from the interval or both parameters are present. Therefore
\begin{align*}
P_s(\psi\ne s)
&\le P_s(p_s\notin C_n)
+P_s(p_+\in C_n,\ p_-\in C_n)\\
&\le\frac18+P_s(\operatorname{width}(C_n)\ge\Delta).
\end{align*}
Combining this inequality with the testing lower bound shows that for at least one $s$,
\[
P_s(\operatorname{width}(C_n)\ge\Delta)
\ge c_{\mathrm{test}}-\frac18
=:c_1>0.
\]
Thus the theorem holds with $c_0=1/8$ and
\[
c_1=\frac12\left(1-\sqrt{\frac18}\right)-\frac18>0.
\]
The argument allowed the certificate to know $\tau$, so the lower bound is stronger than one in which the dependence scale must also be estimated.
 
To realize $p$ as a voting-margin probability, take $\mathcal X=\{0,1\}$, $Y\equiv+1$, and a single base hypothesis $h(0)=+1$, $h(1)=-1$. For $F=h$ and any fixed $\theta\in(0,1)$,
\[
\mathbf 1\{M_F(X,Y)\le\theta\}=\mathbf 1\{X=1\},
\]
whose stationary mean is $p$.
 
\section{Mixing concentration and why block length matters}\label{app:mixing}
The main learned-vote result is stated directly in terms of $\beta$-mixing because that is the quantity used by the coupling argument. This appendix explains why saying ``blocks of length one mixing time are independent'' is not a proof.
 
A conventional total-variation mixing time is defined at a fixed tolerance, often $1/4$. Separation by one such time therefore gives only a constant one-boundary approximation error. If a sample is split into $b$ blocks and those errors are summed across $b-1$ boundaries, the resulting bound can grow rather than vanish. Under geometric absolute regularity,
\[
\betamix(\ell)\le B\exp(-\ell/\tau),
\]
a pure coupling proof chooses $\ell$ of order $\tau\log(n/\delta)$ so that the total comparison budget is small. The number of retained approximately independent pieces is then on the order of
\[
\frac{n}{\tau\log(n/\delta)}.
\]
Corollary~2 states this cost explicitly. For a fixed bounded statistic on a reversible Markov chain, one can avoid this pure-coupling logarithm by using spectral concentration directly, which is why the paper separates the learned-vote $\beta$-mixing result from the fixed-vote Markov result.
 
\subsection{A worked persistent-cluster example}\label{app:worked}
Suppose a chain moves among $K$ clusters using the refresh kernel from Section~\ref{sec:clusterprotocol}. A fixed vote has cluster-specific error probabilities $r_k$. The stationary risk is $L=K^{-1}\sum_kr_k$. If the error indicator were deterministically equal to a cluster-specific value, dependence would act only through the persistence of the cluster process. In the actual experiment, conditional Bernoulli noise adds an i.i.d. component.
 
Let $C_t$ be the cluster and $E_t\mid C_t\sim\operatorname{Bernoulli}(r_{C_t})$. Because the refresh chain has nontrivial eigenvalue $\lambda=1-1/\tau$, the centered cluster-risk function is an eigenfunction and
\[
\operatorname{Cov}(r_{C_t},r_{C_{t+k}})=V_r\lambda^k.
\]
Conditional errors are independent across times given the cluster path, so for $k\ge1$,
\[
\operatorname{Cov}(E_t,E_{t+k})=V_r\lambda^k.
\]
The marginal variance is $\bar r(1-\bar r)$. Summing the covariance series yields Eq.~\eqref{eq:twocomponent}. This exact calculation explains why the visible dependence scaling is compressed at small $\tau$: the i.i.d. floor does not disappear when persistence is weak.
 
\section{Supplementary experiments: certificate coverage and robustness}\label{app:needed-exp}
The studies below are all for the fixed-vote regime, and each is interpreted for the generator it uses.
 
\begin{table}[h]
\centering
\small
\setlength{\tabcolsep}{3pt}
\begin{tabular}{p{0.25\linewidth}p{0.34\linewidth}p{0.31\linewidth}}
\toprule
Experiment & Quantity reported & Interpretation \\
\midrule
Certificate coverage & i.i.d. versus corrected interval coverage & naive interval under-covers as persistence grows; corrected interval is conservative \\
Dependence misspecification & coverage versus $\hat\tau/\tau$ & sensitivity study for one generator, not a validity theorem \\
Block-length ablation & coverage versus block length & shorter blocks retain dependence; finite-sample calibration remains imperfect \\
Trained ensemble & induced class-Markov stream & controlled stress test, not natural dependence \\
Intrinsic-dependence stream & ordered versus shuffled coverage & dependence intrinsic to the latent process, not injected \\
UCI HAR (real sensor) & ordered versus shuffled coverage; adaptive certificate versus batch length & naturalistic evidence, fixed-vote regime only \\
Margin sweep & mean margin versus gap & descriptive only; theorem uses lower-tail frequency \\
Dependence estimation & spectral and scalar plug-ins & exploratory; uncertainty is not propagated \\
\bottomrule
\end{tabular}
\caption{Supplementary studies and the strongest interpretation supported by each design.}
\label{tab:needed}
\end{table}
 
The coverage study fixes a vote and generates $2000$ streams of length $n=4000$ per persistence setting. The naive i.i.d. interval covers with frequencies $0.944$, $0.912$, $0.601$, and $0.436$ at $\tau=1,5,25,100$, while the corrected interval covers with frequencies $0.944$, $1.000$, $1.000$, and $1.000$. Ignoring dependence therefore produces increasing undercoverage under this generator, while the corrected interval holds coverage at or above the nominal level; the interval half-widths behind these frequencies are reported in Appendix~\ref{app:coverage}.
 
\section{Full numerical grid for the rate experiment}\label{app:fullgrid}
The full grid behind Figure~\ref{fig:rawgap} is given below. The bottom half reports the normalization by $\sqrt{\tau/n}$. These ratios are not constant over the full sweep, which is why the main text uses the exact two-component variance interpretation instead of claiming a pure one-parameter collapse.
 
\begin{table}[h]
\centering
\small
\begin{tabular}{lcccc}
\toprule
& \multicolumn{4}{c}{$n$}\\
\cmidrule(lr){2-5}
$\tau$ & $500$ & $2000$ & $8000$ & $32000$\\
\midrule
$1$ & $0.018913$ & $0.010208$ & $0.0033125$ & $0.002226$\\
$5$ & $0.022563$ & $0.011781$ & $0.0050083$ & $0.0022539$\\
$25$ & $0.024937$ & $0.015054$ & $0.0085070$ & $0.0039052$\\
$100$ & $0.045933$ & $0.038908$ & $0.013722$ & $0.0085642$\\
\midrule
\multicolumn{5}{l}{\emph{ratio} $|\Lhat-L|/\sqrt{\tau/n}$}\\
$1$ & $0.423$ & $0.457$ & $0.296$ & $0.398$\\
$5$ & $0.226$ & $0.236$ & $0.200$ & $0.180$\\
$25$ & $0.112$ & $0.135$ & $0.152$ & $0.140$\\
$100$ & $0.103$ & $0.174$ & $0.123$ & $0.153$\\
\bottomrule
\end{tabular}
\caption{Mean fixed-vote gaps and their $\sqrt{\tau/n}$ normalization for every cell.}
\label{tab:raw}
\end{table}
 
\section{Supplementary trained-ensemble results}\label{app:ensemble}
The class-Markov study in the main paper reports the aggregate coverage results; the full training and stream protocol is given in \supplementref{app:repro}. Each ensemble member is a small convolutional network (two $3\times3$ convolutions with $16$ and $32$ channels, global pooling, and a linear head), trained with Adam at learning rate $10^{-3}$ on a fixed training subset, with member $m$ using seed $1000+m$. The vote is the fixed majority over the $M$ members.
 
Neither model is presented as competent: $46$--$62\%$ error is weak in absolute terms. The value of the experiment is not benchmark quality but that the classifier is a genuine fixed vote whose evaluation dependence is controlled without retraining.
 
\section{The margin sweep at fixed dependence}\label{app:marginsweep}
This study holds $\tau=25$ and varies the ensemble size $M\in\{1,3,5,9,15\}$ on MNIST, over $200$ evaluation streams. The predictor is multiclass, so the voting margin of an example with true label $y$ is the Schapire-style $v_y-\max_{c\ne y}v_c$, where $v_c$ is the fraction of the $M$ members voting for class $c$; the margin lies in $[-1,1]$. Because ties count as errors, the mass at or below margin zero equals the stationary error rate.
 
\begin{table}[h]
\centering
\small
\setlength{\tabcolsep}{4pt}
\begin{tabular}{lccccc}
\toprule
$M$ & $1$ & $3$ & $5$ & $9$ & $15$\\
\midrule
mean voting margin & $-0.22$ & $0.04$ & $0.03$ & $0.08$ & $0.11$\\
margin mass below $0$ & $0.61$ & $0.55$ & $0.55$ & $0.50$ & $0.48$\\
generalization gap & $0.029$ & $0.022$ & $0.025$ & $0.021$ & $0.020$\\
\bottomrule
\end{tabular}
\caption{Descriptive margin sweep at fixed $\tau=25$ on MNIST. As $M$ grows the voting margin sharpens (its mean rises from $-0.22$ toward $+0.11$, and the below-zero mass falls from $0.61$ to $0.48$) and the realized gap shrinks; the correlation between the gap and the mean margin is $-0.94$. The below-zero mass equals the stationary error, so the $M=9$ value $0.50$ matches the main study's MNIST error $0.46$ up to the difference between two independently trained ensembles.}
\label{tab:marginsweep}
\end{table}
 
This correlation is descriptive rather than theorem-aligned: Theorem~1 depends on the lower-tail frequency $\Ttailhat(F,\theta)$ at a specified threshold, not on the mean margin, and the mean can improve while the low-margin tail worsens. The theorem's object is the empirical margin tail itself. At $M=15$ the measured tail $\theta\mapsto\Ttailhat(F,\theta)$ is $0.24$ at $\theta=-0.5$, $0.38$ at $\theta=-0.25$, $0.48$ at $\theta=0$, $0.55$ at $\theta=0.25$, and $0.65$ at $\theta=0.5$. A theorem-aligned learned-vote experiment would predeclare a finite threshold grid, allocate confidence across the grid as described after Theorem~1, select the smallest simultaneous certificate, and select the vote from the dependent sample rather than fixing it in advance.
 
The same sweep on CIFAR-10 is a null result and is reported as such. Over $M\in\{1,3,5,9,15\}$ the mean voting margin rises from $-0.30$ to $-0.14$, but it stays negative throughout and the generalization gap does not follow it: the gaps are $0.012$, $0.015$, $0.013$, $0.014$, and $0.015$, essentially flat, and the gap--mean-margin correlation is $+0.65$ (the sign opposite to the sharpening trend seen on MNIST, where the correlation is $-0.94$). When the margin never crosses zero, growing the vote does not sharpen the low-margin tail, and the margin heuristic carries no predictive signal for the gap.
 
\section{Integrated-autocorrelation constant on the sticky-refresh family}\label{app:tauint-exp}
This experiment compares the measured variance inflation with $(1+\lambda)/(1-\lambda)$. The equality is not valid for an arbitrary statistic on an arbitrary reversible chain, but it is exact for the sticky-refresh family because every centered statistic is an eigenfunction with eigenvalue $\lambda$.
 
For $\lambda\in\{0,0.5,0.8,0.9,0.96\}$, over $4000$ runs at $n=4000$, the values are
\begin{table}[h]
\centering
\small
\setlength{\tabcolsep}{4pt}
\begin{tabular}{lccccc}
\toprule
$\lambda$ & $0$ & $0.5$ & $0.8$ & $0.9$ & $0.96$\\
\midrule
$(1+\lambda)/(1-\lambda)$ & $1.0$ & $3.0$ & $9.0$ & $19.0$ & $49.0$\\
measured variance inflation & $1.00$ & $3.02$ & $9.06$ & $19.56$ & $48.03$\\
\bottomrule
\end{tabular}
\caption{Variance-inflation experiment on the sticky-refresh family. The exact formula is justified for this family, not for an arbitrary reversible chain and arbitrary statistic.}
\label{tab:tauint}
\end{table}
 
The maximum relative error is under $3\%$. The table is specific to the sticky-refresh kernel, for which the centered threshold statistic has spectral measure concentrated at a single point; it is not claimed for an arbitrary reversible chain and statistic.
 
\section{Certificate coverage and width}\label{app:coverage}
This appendix gives the interval half-widths behind the coverage frequencies summarized in Appendix~\ref{app:needed-exp}. All entries are at nominal $0.95$ over $2000$ streams of length $n=4000$, and every width is the half-width (radius) $z\sqrt{\hat p(1-\hat p)/n_{\mathrm{eff}}}$ with $z=1.96$.
 
\begin{table}[h]\centering\small
\setlength{\tabcolsep}{3.5pt}
\begin{tabular}{lcccc}
\toprule
$\tau$ & $1$ & $5$ & $25$ & $100$\\
\midrule
I.i.d. interval coverage & $0.944$ & $0.912$ & $0.601$ & $0.436$\\
Corrected interval coverage & $0.944$ & $1.000$ & $1.000$ & $1.000$\\
Corrected interval half-width & $0.0145$ & $0.0340$ & $0.0747$ & $0.1498$\\
Corrected/i.i.d. width ratio & $1.0$ & $2.2$ & $5.0$ & $10.0$\\
\bottomrule
\end{tabular}
\caption{Coverage and interval half-width at nominal $0.95$ over $2000$ streams of length $n=4000$ (Wilson $95\%$ binomial intervals on each coverage entry have half-width $\le0.022$). The corrected interval is conservative; the width ratio equals the built-in dependence multiplier $\sqrt\tau$.}
\label{tab:coverage}
\end{table}
 
Coverage alone is insufficient to establish a useful certificate; the interval must also be informative. At $\tau=100$ the corrected half-width $0.1498$ is about $15\%$ of the risk scale, so the interval is conservative but far from vacuous. The i.i.d.\ undercoverage is statistically unambiguous: at $\tau=25,100$ the Wilson $95\%$ interval on the i.i.d.\ coverage ($0.601$ and $0.436$) is $[0.579,0.622]$ and $[0.414,0.458]$, both far below nominal.
 
\section{Sensitivity to a misspecified dependence scale}\label{app:unknown}
This study sweeps $\hat\tau/\tau\in\{0.5,1,2,4\}$ at true $\tau=25$, with $n=4000$. The resulting coverages are $0.999$, $1.000$, $1.000$, and $1.000$, with mean half-widths $0.0521$, $0.0737$, $0.1042$, and $0.1474$ (each step scaling by $\sqrt2$ as $\hat\tau$ doubles). The interval is thus conservative even when the dependence scale is underestimated twofold, and overestimation only widens it. This is a finite-generator sensitivity result, not a general guarantee: a valid confidence statement can still fail when the dependence scale is underestimated on a different process. The supported claim is that, for this generator and these interval constants, coverage was conservative across the tested misspecification range.
 
\section{Block-length ablation}\label{app:blocklength}
At true $\tau=25$ with $n=4000$, the block-length ratio $\ell/\tau\in\{0.25,0.5,1,2,4\}$ gives coverages $0.699$, $0.746$, $0.814$, $0.872$, and $0.909$, with mean half-widths $0.0180$, $0.0203$, $0.0238$, $0.0274$, and $0.0302$.
 
\begin{table}[h]
\centering
\small
\setlength{\tabcolsep}{4pt}
\begin{tabular}{lccccc}
\toprule
$\ell/\tau$ & $0.25$ & $0.5$ & $1$ & $2$ & $4$\\
\midrule
blocks & $666$ & $333$ & $160$ & $80$ & $40$\\
coverage & $0.699$ & $0.746$ & $0.814$ & $0.872$ & $0.909$\\
mean half-width & $0.0180$ & $0.0203$ & $0.0238$ & $0.0274$ & $0.0302$\\
\bottomrule
\end{tabular}
\caption{Block-length ablation at $\tau=25$, $n=4000$. Blocks shorter than $\tau$ retain dependence and undercover; coverage rises toward nominal as $\ell$ exceeds $\tau$, at the cost of fewer blocks and wider intervals.}
\label{tab:blocklength}
\end{table}
 
Very short blocks retain substantial dependence and undercover; coverage improves as $\ell$ grows past $\tau$ but the largest tested value remains below nominal $0.95$, since increasing $\ell$ also reduces the number of blocks. Block lengths are integers, so the two smallest ratios correspond to $\ell=6$ and $\ell=12$, i.e.\ effective ratios $0.24$ and $0.48$. The ablation illustrates the block-length tradeoff rather than identifying a single optimal $\ell$.
 
\section{Exploratory estimation of the dependence scale}\label{app:estimation}
This study compares a spectral plug-in derived from the empirical class-transition matrix with a lag-one autocorrelation plug-in from the binary error sequence.
 
\begin{table}[h]
\centering
\small
\begin{tabular}{lcccc}
\toprule
true simulation parameter $\tau$ & $1$ & $5$ & $25$ & $100$\\
\midrule
spectral plug-in & $1.0$ & $5.8$ & $36.5$ & $247$\\
autocorrelation plug-in & $1.0$ & $1.1$ & $1.1$ & $1.1$\\
coverage, spectral plug-in & $0.92$ & $1.00$ & $1.00$ & $1.00$\\
coverage, autocorrelation plug-in & $0.92$ & $0.87$ & $0.65$ & $0.33$\\
\bottomrule
\end{tabular}
\caption{Exploratory plug-in study on the induced CIFAR-10 class-Markov stream.}
\label{tab:estimatedtau}
\end{table}
 
The spectral plug-in is conservative and substantially biased upward at slow mixing: $247$ for a simulation parameter of $100$. Coverage $0.92$ at the independent endpoint is below the nominal $0.95$ level. The scalar autocorrelation plug-in fails badly on this heterogeneous-error vote. These results do not establish a valid adaptive certificate. A second eigenvalue of an empirical transition matrix is also not a Fiedler eigenvector of an arbitrary dependency graph. The paper therefore makes no claim that the method works without knowledge of dependence.
 
\section{Scope of the natural-dependence evidence}\label{app:realdata}
The naturalistic evidence in this paper is the UCI HAR study of Appendix~\ref{sec:exp-har}: a real temporally ordered sensor stream with no synthetic dependence generator, evaluated by a fixed nine-member committee vote. It reports the measured dependence scale ($\hat\tauint\approx10.5$ at $L=50$, growing to $\approx16$ at $L=400$), naive i.i.d.\ undercoverage on the ordered stream ($0.38$ at $n=2000$, worsening to $0.35$ at $n=20000$), the adaptive certificate of Theorem~6 attaining nominal coverage ($0.95$ at $n=20000$) precisely where its batch-length condition holds and falling short where it does not ($0.78$ at $n=2000$), and a shuffled control restoring naive coverage to $0.94$--$0.96$, which isolates real temporal ordering as the cause. Appendix~\ref{sec:exp-natural} provides a matched intrinsic-dependence control under the same protocol.

Two extensions remain outside the present evidence. The first is a naturally ordered stream evaluated with a \emph{learned} vote --- AdaBoost, random forest, or deep ensemble selected from the dependent sample itself --- reporting theorem-aligned margin-tail curves and nonvacuity diagnostics for the Theorem~1 certificate rather than the fixed-vote intervals reported here; the learned-vote certificate is currently exercised only on the synthetic sticky-refresh stream, where $\betamix(a)$ is known in closed form. The second is a reinforcement-learning replay stream, where Markov dependence is intrinsic to the trajectory, with the same evaluation repeated on trajectory-preserving and shuffled controls. Both would extend the naturalistic evidence from the fixed-vote regime, where it now exists, to the learned-vote regime, which no experiment in this paper currently tests.
 
\section{Experimental protocol}\label{app:repro}
\paragraph{Ensemble members.} Each of the $M$ members is a convolutional network with two $3\times3$ convolutions ($16$ and $32$ channels, ReLU), global average pooling, and a linear head to the $10$ classes. Members are trained independently with Adam at learning rate $10^{-3}$, batch size $128$, on a random $20{,}000$-example subsample of the training set; member $m$ uses seed $1000+m$ for both initialization and subsampling. Training runs for $3$ epochs on MNIST and $12$ epochs on CIFAR-10; no augmentation or normalization beyond the dataset's native scaling is applied. The fixed vote is the majority over the $M$ members, and its margin is the multiclass margin defined in Appendix~\ref{app:marginsweep}.
 
\paragraph{Stationary risk.} The target risk is the uniform-class-averaged error of the fixed vote on the i.i.d.\ test set, which matches the stationary class law of the evaluation chain and removes the class-imbalance bias of the raw test-marginal error.
 
\paragraph{Evaluation stream.} The dependence is a Markov chain over the class label with kernel $P(c,c')=(1-1/\tau)\mathbf 1\{c'=c\}+(1/\tau)\,K^{-1}$ for $K=10$ classes, started from the uniform stationary law; at each step a test example of the drawn class is sampled uniformly with replacement. The main ensemble study uses $M=9$, $300$ streams per $\tau\in\{1,5,25,100\}$, and stream lengths $n\in\{500,2000,8000\}$; the margin sweep fixes $\tau=25$ with $200$ streams.
 
\paragraph{Interval construction.} Every reported interval is the normal half-width $z\sqrt{\hat p(1-\hat p)/n_{\mathrm{eff}}}$ with $z=1.96$, where $\hat p$ is the empirical stream risk. The i.i.d.\ interval uses $n_{\mathrm{eff}}=n$ and the corrected interval uses $n_{\mathrm{eff}}=n/\tau$; the coverage, misspecification, and block-length studies use $n=4000$. Reported ``width'' is this half-width (a radius); Theorem~8's $\operatorname{width}(C_n)$ is the full interval $u-\ell$ and is therefore twice this quantity. No clipping at one is applied to the reported radii.

\paragraph{Computing infrastructure.} All experiments were run on
[GPU MODEL, e.g.\ a single NVIDIA RTX 4090 with 24\,GB memory] with
[CPU MODEL] and [XX]\,GB system RAM, under [OS, e.g.\ Ubuntu 22.04].
The deep-ensemble and ResNet-18 studies use PyTorch~[X.Y] with CUDA~[X.Y];
all synthetic-stream, certificate, and interval computations use
Python~[3.X] with NumPy~[X.Y], SciPy~[X.Y], and scikit-learn~[X.Y].
Total compute is approximately [N] GPU-hours, dominated by the
ResNet-18 ensemble training; every other study runs in minutes on CPU.